\newcommand{\ie}{\textit{i}.\textit{e}.,\ }
\newcommand{\eg}{\textit{e}.\textit{g}.,\ }
\def\eqref#1{equation~\ref{#1}}
\def\1{\bm{1}}
\DeclareMathAlphabet{\mathsfit}{\encodingdefault}{\sfdefault}{m}{sl}
\SetMathAlphabet{\mathsfit}{bold}{\encodingdefault}{\sfdefault}{bx}{n}
\theoremstyle{plain}
\crefname{section}{Section}{Sections}
\Crefname{section}{Section}{Sections}
\Crefname{table}{Table}{Tables}
\crefname{table}{Table}{Tables}
\Crefname{figure}{Figure}{Figures}
\crefname{figure}{Figure}{Figures}
\Crefname{prop}{Proposition}{Propositions}
\crefname{prop}{Proposition}{Propositions}
\Crefname{appendix}{Appendix}{Appendices}
\crefname{appendix}{Appendix}{Appendices}
\DeclareMathOperator{\cP}{\mathcal{P}}
\newcommand{\blambda}{\bm{\lambda}}
\newcommand{\bV}{\bm{V}}
\newcommand{\bQ}{\bm{Q}}
\DeclarePairedDelimiterX{\infdivx}[2]{(}{)}{%
  #1\;\delimsize\|\;#2%
}
\title{Efficient Action-Constrained Reinforcement Learning via Acceptance-Rejection Method and Augmented MDPs}
\author{
\textbf{Wei Hung\textsuperscript{1}}\quad \textbf{Shao-Hua Sun\textsuperscript{2}}\quad \textbf{Ping-Chun Hsieh\textsuperscript{1}}\\
\textsuperscript{1}National Yang Ming Chiao Tung University, Hsinchu, Taiwan\\
\textsuperscript{2}National Taiwan University, Taipei, Taiwan\\
\texttt{\{hwei1048576.cs08, pinghsieh\}@nycu.edu.tw}
}
\newcommand{\REV}[1]{\textcolor{black}{#1}}
\newcommand{\CAM}[1]{\textcolor{black}{#1}}
\newcommand{\proposedalgorithm}{ARAM\xspace}
\begin{document}

\doparttoc 
\faketableofcontents 

\maketitle

\begin{abstract}
Action-constrained reinforcement learning (ACRL) is a generic framework for learning control policies with zero action constraint violation, which is required by various safety-critical and resource-constrained applications. The existing ACRL methods can typically achieve favorable constraint satisfaction but at the cost of either a high computational burden incurred by the quadratic programs (QP) or increased architectural complexity due to the use of sophisticated generative models. In this paper, we propose a generic and computationally efficient framework that can adapt a standard unconstrained RL method to ACRL through two modifications: (i) To enforce the action constraints, we leverage the classic acceptance-rejection method, where we treat the unconstrained policy as the proposal distribution and derive a modified policy with feasible actions. (ii) To improve the acceptance rate of the proposal distribution, we construct an augmented two-objective Markov decision process (MDP), which includes additional self-loop state transitions and a penalty signal for the rejected actions. This augmented MDP incentivizes the learned policy to stay close to the feasible action sets. Through extensive experiments in both robot control and resource allocation domains, we demonstrate that the proposed framework enjoys faster training progress, better constraint satisfaction, and a lower action inference time simultaneously than the state-of-the-art ACRL methods. We have made the source code publicly available\footnote[1]{\url{https://github.com/NYCU-RL-Bandits-Lab/ARAM}} to encourage further research in this direction.
\end{abstract}

\section{Introduction}

Action-constrained reinforcement learning (ACRL) aims to find an optimal policy maximizing expected cumulative return while satisfying constraints imposed on the action space and has served as a generic framework for learning sequential decision making in both safety-critical and resource-constrained applications. 
As a classic example, robot control is usually subject to the inherent kinematic constraints of the robots, \eg torque or output power, which need to be satisfied throughout the training and the inference stages to avoid damage to physical components \citep{singletary2021safety,tang2024learning,liu2024efficient}. Another example is dynamic resource allocation for networked systems \citep{chen2024generative,jay2018internet,chen2021enforcing}, such as communication networks and bike sharing systems \citep{zhang2022meta, zhang2021data}, which involve the capacity constraints on the communication links and the docking facilities, respectively. 
To prevent network congestion and resource over-utilization, these constraints must be considered at each step of training and deployment.
Given its wide applicability, developing practical ACRL algorithms that can learn policies accruing high returns with minimal constraint violation is essential.

Existing ACRL methods have explored the following techniques: 
(i) \textit{Action projection}: As a conceptually simple and widely-used technique, action projection finds a feasible action closest to the original unconstrained action produced by the policy. The projection step can be used in action post-processing \citep{kasaura2023benchmarking} or implemented by a {differentiable projection layer} \citep{amos2017optnet} as part of the policy network of a standard deep RL algorithm for end-to-end training~\citep{pham2018optlayer,dalal2018safe,bhatia2019resource}. 
Despite the simplicity, to find close feasible actions, action projection needs to solve a quadratic program (QP), which is computationally costly and scales poorly to high-dimensional action spaces~\citep{ichnowski2021accelerating}.
(ii) \textit{Frank-Wolfe search}: \citet{lin2021escaping} propose to {decouple policy updates from action constraints} by a Frank-Wolfe search subroutine. Despite its effectiveness, Frank-Wolfe method requires solving multiple QPs per training iteration and therefore suffers from substantially higher training time. 
(iii) \textit{Generative models}: 
To replace the projection step, generative models, such as Normalizing Flows~\citep{kobyzev2020normalizing}, have been employed as a learnable projection layer that is trained to satisfy the constraints~\citep{brahmanage2023flowpg,chen2024generative}.

Despite the recent advancement in ACRL, 
existing algorithms suffer from significant computational overhead induced by solving QPs or learning sophisticated models, such as Normalizing Flows.
\Cref{tab:algorithm_comparison} summarizes the limitations of existing methods in interaction time, training load, and action violation rate.
This motivates us to develop a more efficient ACRL approach.


\begin{table}[ht]
    \centering
    \caption{{A qualitative comparison of ACRL algorithms.}}
    \scalebox{0.75}{\begin{tabular}{lp{1.4cm}p{1.6cm}p{1.6cm}p{1.2cm}l}
        \toprule
        \textbf{Algorithm} & \textbf{Action violation rate} & \textbf{Generative model} & \textbf{Interaction time} & \textbf{Training load} & \textbf{Remarks}  \\
        \midrule
        OptLayer~\citep{pham2018optlayer}      & High & \ding{55} & High   & \textbf{Low} & Zero-gradient issue     \\
        ApprOpt~\citep{bhatia2019resource}    & High & \ding{55} & High   & \textbf{Low}  & Zero-gradient issue   \\
        NFWPO~\citep{lin2021escaping}     & \textbf{Low}  & \ding{55} & \textbf{Low}    & High & Rely heavily on QPs   \\
        FlowPG~\citep{brahmanage2023flowpg}   & \textbf{Low}  & \ding{51} & \textbf{Low} & High & Require pre-training \\
        DPre+~\citep{kasaura2023benchmarking}  & High & \ding{55} & High   & \textbf{Low} & - \\
        SPre+~\citep{kasaura2023benchmarking}  & High & \ding{55} & High   & \textbf{Low} & - \\
        IAR-A2C~\citep{chen2024generative} & \textbf{Low} & \ding{51} & \textbf{Low}   & High &  Support only discrete actions \\
        \proposedalgorithm (Ours)      & \textbf{Low}  & \ding{55} & \textbf{Low}    & \textbf{Low} & - \\
        \bottomrule
    \end{tabular}}
    \label{tab:algorithm_comparison}
\end{table}

We propose a framework called \proposedalgorithm, which 
augments a standard deep RL algorithm  with 
two modifications: \textbf{A}cceptance-\textbf{R}ejection method and 
\textbf{A}ugmented \textbf{M}DPs.
(i) \textit{Acceptance-rejection method}: Given a policy network, \proposedalgorithm enforces the action constraints by rethinking ACRL through acceptance-rejection sampling, \ie first sampling actions from the unconstrained policy and then only accepting those that are in the feasible action set. This sampling strategy can substantially reduce the need for solving QPs, compared to the methods built on the action projection step or the Frank-Wolfe search. 
(ii) \textit{Augmented unconstrained two-objective Markov decision process}: One technical issue of the acceptance-rejection method is the possibly low acceptance rate, which is likely to occur in the early training stage. 
Under a low acceptance rate, the sampling process could take excessively long and thereby incur a high training overhead. 
To improve the acceptance rate, we augment the original MDP with additional self-loop transitions and a penalty function induced by the event whether the action is accepted. 
Through this augmented MDP, we leverage the penalty induced by constraint violation to guide the policy distribution towards regions of higher acceptance rate. 
Notably, these two modifications can be combined with any standard deep RL algorithm. 
In this paper, we take the Soft Actor Critic (SAC) as the base RL algorithm. 
Moreover, to obviate the need for hyperparameter tuning of the penalty weight, we directly leverage the multi-objective extension of SAC that can learn policies under all the penalty weights.

We evaluate the proposed \proposedalgorithm in various ACRL benchmarks, including the MuJoCo locomotion tasks and resource allocation of communication networks and bike sharing systems. 
The experimental results show that: (i) \proposedalgorithm enjoys faster learning progress than the state-of-the-art ACRL methods, measured either in environment steps or wall clock time. Moreover, the difference is particularly significant under wall clock time thanks to the low training time of our design. 
(ii) \proposedalgorithm requires significantly fewer QP operations than the other ACRL benchmark methods, mostly by 2-5 orders of magnitude fewer. 
(iii) \proposedalgorithm indeed achieves high action acceptance rate through the guidance of the augmented MDP. 
(iv) \proposedalgorithm also enjoys the lowest per-action inference time as it largely obviates the need for QP operations and learns without using a generative model.

\section{Related Work}

\textbf{Action-Constrained RL.} The first category focuses on ensuring that the actions meet the constraints at each step of the training and evaluation processes. To ensure zero constraint violation, one natural and commonly-used technique is \textit{action projection}, which finds a feasible action closest to the original unconstrained action produced by the policy. To enable end-to-end training, multiple ACRL methods have incorporated the \textit{differentiable projection layer} \citep{amos2017optnet} as the output layer of the policy network, such as OptLayer \citep{pham2018optlayer}, Safety Layer \citep{dalal2018safe}, and Approximate OptLayer \citep{bhatia2019resource}. However, these projection layers are known to suffer from slow learning in ACRL due to the zero-gradient issue \citep{lin2021escaping}.

To address the zero-gradient issue, several approaches have been adopted recently: (i) \textit{Frank-Wolfe search}: \citet{lin2021escaping} propose Neural Frank-Wolfe Policy Optimization (NFWPO), which {decouples policy updates from action constraints} by a Frank-Wolfe search subroutine. Despite its effectiveness, Frank-Wolfe search involves solving multiple quadratic programs (QP) per training iteration, and this is computationally very costly and is known to scale poorly to high-dimensional action spaces \citep{ichnowski2021accelerating}. (ii) \textit{Removing projection layers and using pre-projected actions for critic learning}: Without using differentiable projection layers, \citet{kasaura2023benchmarking} propose DDPG-based DPre+ and SAC-based SPre+, which use a QP-based projection step for action post-processing, train the critic with pre-projected actions, and apply a penalty term to reduce constraint violation. Notably, DPre+ and SPre+ achieve fairly strong reward performance but still at the cost of high computational overhead incurred by QPs. (iii) \textit{Generative models}: To replace the projection layer, generative models, e.g., normalizing flow, have been integrated into the policy network, such as FlowPG for continuous control \citep{brahmanage2023flowpg} and IAR-A2C for discrete control \citep{chen2024generative}, to generate multi-modal action distributions that can better satisfy the constraints. Despite the effectiveness, learning a sophisticated generative model adds substantial design complexity to ACRL. By contrast, the proposed \proposedalgorithm largely removes the overhead of QPs and completely obviates the need for generative models.

{\textbf{RL for Constrained MDPs.} The other class of methods focuses on ensuring the long-term average action safety by defining a cost function and modeling the problem as a Constrained MDP (CMDP)~\citep{altman2021constrained}. For example, Constrained Policy Optimization (CPO)~\citep{achiam2017constrained} is the first policy gradient method developed to solve CMDPs. It uses the Fisher information matrix and second-order Taylor expansion to ensure safety constraints, but it is computationally expensive and requires more samples, potentially reducing efficiency. To address these, \citet{tessler2018reward} propose Reward Constrained Policy Optimization (RCPO), which leverages primal-dual methods to improve both the efficiency and effectiveness of policy optimization under constraints. Building on similar goals of improving efficiency, FOCOPS~\citep{zhang2020first} takes a different approach by using a first-order approximation for policy optimization. This reduces computational complexity but introduces convergence issues due to approximation errors in the first-order constraints. The above list is by no means exhaustive and is only meant to provide an overview of this line of research. Please refer to \citep{liu2021policy} for more related prior works.
While these methods can ensure that the long-term expected cost remains under a certain threshold, they fail to enforce action constraints at every environment step needed in ACRL throughout training and deployment.} 

{\textbf{Augmented Safety Mechanisms in RL.} To improve safety in exploration, \citet{eysenbach2018leave} introduced a reset framework, which detects when the agent enters unsafe states and resets it, thereby improving both safety and sampling efficiency. Building on this, \citet{thananjeyan2021recovery} introduced the concept of learned recovery zones, ensuring that when the agent deviates from safe limits, it can autonomously return to a safe state, providing more robust safety guarantees during exploration. In parallel, \citet{thomas2021safe} proposed a specialized Markov Decision Process (MDP) to constrain the training process, helping to further avoid unsafe actions during exploration. In addition to mechanisms that correct unsafe actions, Safety Augmented Value Estimation from Demonstrations (SAVED)~\citep{thananjeyan2020safety} employs model predictive control (MPC) to proactively avoid unsafe actions by updating the policy, specifically preventing infeasible actions that could lead to dangerous situations. Complementing this, \citet{yu2022towards} proposed Safety Editor (SEditor), a mechanism that transforms actions produced by a utility maximizer into safe alternatives, preventing violations of safety constraints during execution. While these methods can not fully guarantee that the policy's actions always remain within safe regions, they offer an effective approach to significantly reduce unsafe behaviors, laying a foundation to develop more robust safety mechanisms.}

\section{Preliminaries: Action-Constrained Reinforcement Learning}

In ACRL, we consider an action-constrained Markov Decision Process (MDP). 
Given a set $\mathcal{X}$, let $\Delta(\mathcal{X})$ denote the set of all probability distributions on $\mathcal{X}$. 
An action-constrained MDP is defined by a tuple $\mathcal{M} := (\mathcal{S}, \mathcal{A}, \mathcal{P}, \gamma, r, \mathcal{C})$, where $\mathcal{S}$ denotes the state space, $\mathcal{A}$ denotes the action space, $\mathcal{P}: \mathcal{S} \times \mathcal{A} \rightarrow \Delta(\mathcal{S})$ serves as the transition kernel, $\gamma\in (0,1)$ is the discount factor, $r:\mathcal{S}\times \mathcal{A} \rightarrow \mathbb{R}$ denotes the bounded reward function. 
Without loss of generality, we presume the reward $r(s,a)$ to lie in the $[0,1]$ interval since we can rescale a bounded reward function to the range of $[0,1]$ given the maximum and minimum possible reward values. 
For each $s \in \mathcal{S}$, there is a non-empty feasible action set $\mathcal{C}(s) \subseteq \mathcal{A}$ induced by the underlying collection of action constraints. \REV{That is to say, no actions outside the feasible set $\mathcal{C}(s)$ can be applied to the environment, ensuring that only valid actions are considered within the system dynamics.} Notably, we make no assumption on the structure of $\mathcal{C}(s)$ (and hence $\mathcal{C}(s)$ needs not be convex).

At each time $t\in \mathbb{N}\cup \{0\}$, the learner observes the current state $s_t\in \mathcal{S}$ of the environment, selects a feasible action \REV{$a_t\in \mathcal{C}(s_t)$}, and receives reward $r_t$.
We use $\pi: \mathcal{S}\rightarrow \Delta(\mathcal{A})$ to denote a Markov stationary stochastic policy, which is updated iteratively by the learner.
Given a policy $\pi$, the Q functions $Q(\cdot,\cdot; \pi):\mathcal{S}\times \mathcal{A}\rightarrow \mathbb{R}$ is defined as ${Q}(s, a;{\pi}):=\mathbb{E}\big[\sum_{t=0}^{\infty} \gamma^t {r}_t\rvert s_0=s,a_0=a;\pi\big]$, which
can be characterized as the unique solution to the following Bellman equation:
\begin{equation}
    Q(s, a; \pi)=r(s,a)+\gamma \mathbb{E}_{s'\sim \mathcal{P}, a'\sim \pi(\cdot \rvert s')}[Q(s',a'; \pi)].
\end{equation}

To learn a policy and the corresponding Q function under large state and action spaces, we use the parameterized functions $\pi_\phi: \mathcal{S}\rightarrow \Delta(\mathcal{A})$ and  $Q_{\theta}:\mathcal{S}\times\mathcal{A}\rightarrow \mathbb{R}$ as function approximators, where $\phi$ and $\theta$ typically denote the parameters of neural networks in the deep RL literature.
Our goal is to learn an optimal policy $\pi^*$ such that $Q(s, a; \pi^*) \geq Q (s, a; \pi)$, for all $s \in \mathcal{S}, a \in \mathcal{C}(s)$ and $\pi \in \Pi_{\mathcal{C}}$, where $\Pi_{\mathcal{C}} := \{ \pi : \mathcal{S} \rightarrow \Delta(\mathcal{C})\}$, which denotes the set of all feasible policies. We also use $\Pi_{\mathcal{A}} := \{ \pi : \mathcal{S} \rightarrow \Delta(\mathcal{A})\}$ to denote the set of all unconstrained Markov stationary policies.

\textbf{Notations.} Throughout the paper, we use $\langle \bm{x},\bm{y}\rangle$ to denote the inner product of two real vectors $\bm{x},\bm{y}$. Moreover, we use $\boldsymbol{1}_d$ to denote the $d$-dimensional vector of all ones.
\vspace{-3mm}



\section{Algorithm}

To address ACRL, we devise and introduce two main modifications to existing deep RL algorithms: the acceptance-rejection method (\Cref{sec: arm}) and the augmented two-objective MDP (\Cref{sec:alg:AUTOMDP}).
Then, we describe a practical multi-objective RL implementation of \proposedalgorithm in~\Cref{sec: morl}.
\vspace{-3mm}



\subsection{Acceptance-Rejection Method}
\label{sec: arm}

To adapt a standard deep RL method to ACRL, we need to convert an unconstrained policy $\pi_{\phi}\in \Pi_{\mathcal{A}}$ into a feasible policy ${\pi}^{\dagger}_{\phi}\in \Pi_{\mathcal{C}}$. Notably, the action constraints, in general, can be complex and take arbitrary forms of expression. As a result, the feasible action sets $\mathcal{C}(s)$ are likely to be rather unstructured. To tackle this, we propose to rethink the constraint satisfaction in ACRL through the classic acceptance-rejection method (ARM), which is a generic algorithm for sampling from general distributions~\citep{kroese2013handbook}.

\textbf{Using ARM in the context of ACRL.} For didactic purposes, here we focus on the continuous control and assume that the action space $\mathcal{A}$ is a compact convex set despite that the same argument can work seamlessly under discrete action spaces. Let $f$ and $g$ be two probability density functions over $\mathcal{A}$. ARM can generate random variables that follow the target distribution $f$ while drawing samples from another \textit{proposal distribution} $g$.
To put ARM in the context of ACRL, let us fix a state $s$ and take the constrained and the unconstrained policies as the target and the proposal distributions, respectively, \ie $f\equiv{\pi}^{\dagger}_{\phi}(s)\in \Delta(\mathcal{C}(s)),$ $g\equiv \pi_{\phi}(s)\in \Delta(\mathcal{A})$. Clearly, we have ${\pi}^{\dagger}_{\phi}(a\rvert s)=0$ for all $a\notin \mathcal{C}(s)$. Let $M>0$ be some constant such that $M\cdot \pi_{\phi}(a\rvert s)\geq \pi^{\dagger}_{\phi}(a\rvert s)$, for all $a\in\mathcal{A}$. 
ARM can generate an action that follows ${\pi}^{\dagger}_{\phi}(s)$:
\begin{enumerate}
    \item Generate $a'\in\mathcal{A}$ from the unconstrained $\pi_{\phi}(s)$.
    \item If $a'\in \mathcal{C}(s)$, accept $a'$ with probability $\pi^{\dagger}_{\phi}(a'\rvert s)/(M\cdot \pi_{\phi}(a'\rvert s))$; otherwise, if $a'\notin \mathcal{C}(s)$, reject $a'$ and return to the first step.
\end{enumerate}
\textbf{Choices of target distribution $\pi^{\dagger}_{\phi}$.} Note that we have the freedom to configure the desired $\pi^{\dagger}_{\phi}$ for specific purposes, \eg exploration. One convenient choice is to simply set $\pi^{\dagger}_{\phi}(a\rvert s)\propto \pi_{\phi}(a\rvert s)$, for all $a\in \mathcal{C}(s)$. In this case, we can always accept an action $a'\in \mathcal{C}(s)$ in the above step 2 if we set $M=1/(\int_{a\in \mathcal{C}(s)}\pi_{\phi}(a\rvert s)da)$.

\textbf{Salient features of ARM.} 
The advantages of using ARM in ACRL are 
two-fold: 
(i) \textit{Efficiency}: ARM is computationally efficient as it only requires checking if an action satisfies the constraints. As a result, ARM largely obviates the need to solve QPs or learn a generative model. 
(ii) \textit{Generality}: ARM is very general, 
\ie can be integrated with any standard unconstrained RL method. 

\textbf{Issues of low acceptance rate under ARM.} Despite the efficiency and generality of ARM, solely naively adopting ARM in ACRL can lead to keeping sampling invalid actions and getting stuck when the policy has a low action acceptance rate.
The issues 
are two-fold: (i) The ARM procedure could repeat indefinitely with a near-zero acceptance probability. 
In ACRL, this scenario is likely to happen at the early training stage since the randomly initialized action distribution can be drastically different from the feasible action set. 
(ii) A low acceptance rate typically implies a poor action coverage of the policy over $\mathcal{C}(s)$. This would significantly affect the performance in the cumulative reward.
To address this issue by increasing the action acceptance rate through the course of learning, we present our solution in~\Cref{sec:alg:AUTOMDP}.

\subsection{Augmented Unconstrained Two-Objective MDP}
\label{sec:alg:AUTOMDP}

To mitigate this issue of low ARM acceptance rate, we propose to apply ARM on an augmented unconstrained MDP, which guides the policy updates towards the feasible action set by a penalty signal induced by the action acceptance events of ARM, instead of on the original action-constrained MDP. Moreover, we show that these two MDPs are equivalent 
with respect to 
optimal policies.

\begin{wrapfigure}[11]{R}{0.4\textwidth}
\centering
\includegraphics[width=1\linewidth]{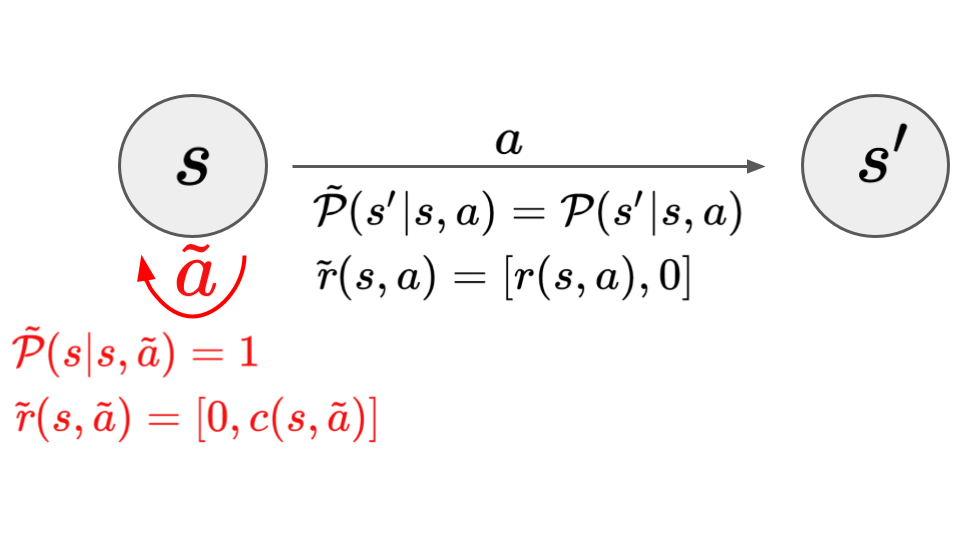}
\caption{An illustration of AUTO-MDP, where $a \in \mathcal{C}(s)$ and $\tilde{a} \notin \mathcal{C}(s)$.}
\label{fg:AUTO-MD}
\end{wrapfigure}

\textbf{Constructing an augmented MDP.} Based on the original action-constrained MDP $\mathcal{M}=(\mathcal{S}, {\mathcal{A}}, {\mathcal{P}}, \gamma, {{r}}, \mathcal{C})$, we propose to construct an Augmented Unconstrained Two-Objective MDP (AUTO-MDP) $\tilde{\mathcal{M}} := (\mathcal{S}, {\mathcal{A}}, \tilde{\mathcal{P}}, \gamma, \tilde{\bm{r}})$ by adding additional self-loop state transitions and penalty signal for those actions $a\in \mathcal{C}(s)$:
\begin{itemize}[leftmargin=*]
    \item The AUTO-MDP $\tilde{\mathcal{M}}$ shares the same state and action spaces with the original MDP $\mathcal{M}$.
    \item The augmented reward function $\tilde{\bm{r}}:\mathcal{S}\times \mathcal{A}\rightarrow \mathbb{R}^2$ returns a 2-dimensional reward vector $[r(s,a), c(s,a)]$ and is defined as: Let $\mathcal{K}>0$ be a constant penalty. Then, we construct (i) For any $(s,a)$ with $a\notin \mathcal{C}(s)$, $\tilde{\bm{r}}(s,a):=  \left [  0, -\mathcal{K} \right ]$. (ii) For any $(s,a)$ with $a\in \mathcal{C}(s)$, $\tilde{\bm{r}}(s,a):=  \left [ r(s,a), 0\right ]$.
\end{itemize}

\begin{itemize}[leftmargin=*]
    \item The augmented transition kernel $\tilde{\mathcal{P}}$ is defined as follows: For any $(s,a,s')$ with $a\in \mathcal{C}(s)$, let $\tilde{\mathcal{P}}(s'\rvert s,a)={\mathcal{P}}(s'\rvert s,a)$. For any $(s,a,s')$ with $a\notin \mathcal{C}(s)$, let \begin{equation}
     \tilde{\mathcal{P}}(s'\rvert s,a)= \left\{\begin{array}{ll} 
    1 , & \mbox{$s = s'$} \\  
    0 , & \mbox{otherwise}
    \end{array} \right.
    \end{equation} 
\end{itemize}
The idea of AUTO-MDP is illustrated in~\Cref{fg:AUTO-MD}.

Moreover, for any policy $\pi\in \Pi_{\mathcal{A}}$, we define the vector-valued Q function $\bm{Q}:\mathcal{S}\times\mathcal{A}\rightarrow \mathbb{R}^2$ of $\pi$ as $\bm{Q}(s, a;{\pi}):=\mathbb{E}\big[\sum_{t=0}^{\infty} \gamma^t\bm{r}_t\rvert s_0=s,a_0=a;\pi\big]$,
which is a natural extension of the standard scalar-valued Q function. Clearly, the vector-valued Q function also satisfies the Bellman equation
\begin{equation}
    \bm{Q}(s, a;{\pi})=\bm{r}(s,a)+\gamma \mathbb{E}_{s'\sim \tilde{\mathcal{P}}, a'\sim \pi}[\bm{Q}(s',a';\pi)].
\end{equation}
As in the standard multi-objective MDPs (MOMDP), the set of optimal policies depend on the preference over the objectives. In the MOMDP literature~\citep{abels2019dynamic,yang2019generalized}, this is typically characterized by using linear scalarization with a \textit{preference vector} $\bm{\lambda}=[\lambda_r,\lambda_c]$ such that the scalarized Q value of a policy $\pi$ is defined as $Q_{\bm{\lambda}}(s,a;\pi):=\langle \bm{\lambda},\bm{Q}(s, a;{\pi})\rangle$. Without loss of generality, we presume that $\bm{\lambda}$ lies in a two-dimensional probability simplex, i.e., $\lambda_r\geq 0$, $\lambda_c\geq 0$, and $\lambda_r+\lambda_c=1$.

With such a design, to obtain a policy with a sufficiently high acceptance rate under ARM, the learner shall find a policy that maximizes cumulative reward while minimizing violation penalty.

\textbf{Equivalence of AUTO-MDP and original MDP in terms of optimal policies.}
We show that the constructed AUTO-MDP and the original MDP are equivalent in the sense that they share the same set of optimal policies. This property can be formally stated in the following proposition.
\begin{restatable}[\textbf{Equivalence in optimality}]{prop}{equivprop}
\label{prop:equiv}
Let $\pi^* \in \Pi_{\mathcal{C}}$ be an optimal policy among all the policies in $\Pi_{\mathcal{C}}$ under the original action-constrained MDP $\mathcal{M}$. Then, for any $\bm{\lambda}\in \Lambda$, the policy $\pi^*$ remains an optimal policy among all the policies in $\Pi_{\mathcal{A}}$ under the AUTO-MDP $\tilde{\mathcal{M}}$.  
\end{restatable}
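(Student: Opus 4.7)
The plan is to fix any preference vector $\bm{\lambda}=[\lambda_r,\lambda_c]\in\Lambda$ and show that $V^\dagger(s):=\lambda_r V(s;\pi^*)$, where $V(s;\pi^*):=\mathbb{E}_{a\sim\pi^*(\cdot\mid s)}[Q(s,a;\pi^*)]$ denotes the standard state value of $\pi^*$ in the original MDP $\mathcal{M}$, is the fixed point of the Bellman optimality operator $\tilde{T}_{\bm{\lambda}}$ of the $\bm{\lambda}$-scalarized AUTO-MDP. Since $\gamma\in(0,1)$ and the scalarized reward is bounded, $\tilde{T}_{\bm{\lambda}}$ is a $\gamma$-contraction in sup-norm, so its unique fixed point coincides with the optimal value $V^*_{\bm{\lambda}}$. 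Coupled with the fact that $\pi^*\in\Pi_{\mathcal{C}}\subseteq\Pi_{\mathcal{A}}$ actually attains $V^\dagger$, this will yield the claimed optimality of $\pi^*$ over $\Pi_{\mathcal{A}}$.

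Concretely, I would first observe that because $\pi^*\in\Pi_{\mathcal{C}}$ only samples from $\mathcal{C}(s)$, rollouts under $\pi^*$ in $\tilde{\mathcal{M}}$ never trigger self-loops and only accumulate rewards of the form $[r(s,a),0]$; substituting into the vector-valued Bellman equation and invoking uniqueness of the Bellman fixed point yields $\bm{V}(s;\pi^*)=[V(s;\pi^*),0]$, so its $\bm{\lambda}$-scalarization is exactly $V^\dagger(s)$. I would then write down the Bellman optimality equation for $\tilde{T}_{\bm{\lambda}}$ and split the action maximum by feasibility: for $a\in\mathcal{C}(s)$ the inner expression is $\lambda_r r(s,a)+\gamma\mathbb{E}_{s'\sim\mathcal{P}(\cdot\mid s,a)}[V^\dagger(s')]$, whose supremum over $\mathcal{C}(s)$ matches $\lambda_r V(s;\pi^*)=V^\dagger(s)$ by the Bellman optimality of $\pi^*$ in the action-constrained MDP $\mathcal{M}$; for $a\notin\mathcal{C}(s)$ the inner expression collapses to $-\lambda_c\mathcal{K}+\gamma V^\dagger(s)$ due to the self-loop and the action-independent penalty.

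The main technical obstacle is dominating the infeasible branch, i.e., showing $-\lambda_c\mathcal{K}+\gamma V^\dagger(s)\leq V^\dagger(s)$ for every $s\in\mathcal{S}$. I would use the stated normalization $r(s,a)\in[0,1]$ to conclude $V(s;\pi^*)\geq 0$ and hence $V^\dagger(s)\geq 0$, so that $(1-\gamma)V^\dagger(s)\geq 0\geq -\lambda_c\mathcal{K}$ delivers the inequality. The argument remains valid in the degenerate case $\lambda_c=0$ since $\gamma<1$ alone already gives $\gamma V^\dagger(s)\leq V^\dagger(s)$. Combined with the feasible-branch calculation this verifies $\tilde{T}_{\bm{\lambda}} V^\dagger=V^\dagger$, so by contraction $V^\dagger\equiv V^*_{\bm{\lambda}}$ and $\pi^*$ attains this value in $\tilde{\mathcal{M}}$; translating back to $\bm{Q}$ via $\bm{Q}(s,a;\pi^*)=\tilde{\bm{r}}(s,a)+\gamma\mathbb{E}_{s'\sim\tilde{\mathcal{P}}}[\bm{V}(s';\pi^*)]$ then yields the Q-value form of optimality stated in the proposition.
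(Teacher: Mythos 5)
Your proof is correct, but it takes a genuinely different route from the paper's. The paper argues by contradiction and one-step policy improvement: it takes a deterministic optimal policy $\pi_{\bm{\lambda}}^*$ of the AUTO-MDP, assumes it picks an infeasible action at some state $\bar{s}$, rolls out the self-loop to compute the scalarized value $\lambda_c\cdot\frac{-\mathcal{K}}{1-\gamma}$ of that action, and shows any feasible action strictly improves on it (using $r\geq 0$, $\mathcal{K}>0$, $\gamma<1$), so every optimal policy of the AUTO-MDP must lie in $\Pi_{\mathcal{C}}$; the identification of $\pi^*$ as optimal is then left implicit. You instead verify directly that $V^{\dagger}=\lambda_r V(\cdot\,;\pi^*)$ is a fixed point of the scalarized Bellman optimality operator of $\tilde{\mathcal{M}}$ and invoke uniqueness via the $\gamma$-contraction. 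Both arguments hinge on the same ingredients (non-negativity of the reward, the self-loop structure, and the sign of the penalty), but your approach buys a few things: it avoids assuming existence of a deterministic optimal policy for the augmented MDP, it explicitly identifies the optimal scalarized value of $\tilde{\mathcal{M}}$ as $\lambda_r$ times the original optimal value, and it only needs the weak inequality $-\lambda_c\mathcal{K}+\gamma V^{\dagger}(s)\leq V^{\dagger}(s)$, so it covers the boundary preference $\lambda_c=0$ cleanly — a case where the paper's strict inequality (its equation chain ending in the strict comparison with $\lambda_c\cdot\frac{-\mathcal{K}}{1-\gamma}$) degenerates when $r(\bar{s},a)=0$, even though the proposition as stated still holds there. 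The paper's argument, in exchange, yields the slightly stronger structural fact that for $\lambda_c>0$ every optimal policy of the AUTO-MDP is feasible, which your fixed-point computation does not directly give.
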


\setlength{\columnsep}{15pt}
\begin{wrapfigure}[28]{R}{0.48\textwidth}
\begin{algorithm}[H]
\SetCustomAlgoRuledWidth{0.45\textwidth}
\caption{Practical Implementation of \proposedalgorithm}
\label{al:pseudo-code}
\SetAlgoNlRelativeSize{0}

\SetKwInOut{Input}{Input}
\SetKwInOut{Output}{Output}

\Input{Initial parameters $\phi, \theta$, preference sampling distribution $\rho_{\blambda}$, actor and critic learning rates $\xi_\pi,\xi_Q$}
Initialize the replay buffer $\mathcal{D}_{r}$\;
Initialize the augmented replay buffer $\mathcal{D}_{a}$\;
\For{each iteration $j$}{
            Sample $\blambda\in\Lambda$ according to $\rho_{\blambda}$\;
    \For{each environment step $t$}{
             Sample $a_t\sim\pi^\dagger_{\phi}(s_t;\blambda)$ by ARM\;
            Obtain augmented reward $\bm{r}_t=[r_t,c_t]$ and next state $s_{t+1}$\;
            \If{$a_t\in \mathcal{C}(s_t)$}{
                Store $(s_t, a_t, r_t, c_t, s_{t+1})$ in $\mathcal{D}_{r}$ 
            }
            \Else{Store $(s_t, a_t, r_t, c_t, s_{t+1})$ in $\mathcal{D}_{a}$\;}
            
    }
    \For{each gradient step $\tau$}{
        Draw a mini-batch of samples from $\mathcal{D}_{r}$ and $\mathcal{D}_{a}$\;
        Critic update by (\ref{eq:critic_update}): $\theta \leftarrow \theta - \xi_Q \nabla_{\hat{\theta}} J_Q(\theta)$\;
        Policy update by (\ref{eq:policy_update}): $\phi \leftarrow \phi - \xi_\pi \nabla_{\hat{\phi}} J_{\pi}(\phi)$\;
    }
}
\end{algorithm}
\end{wrapfigure}

The proof of \Cref{prop:equiv} is provided in~\Cref{appendix:proof}. This result suggests that solving AUTO-MDP can achieve the same maximum cumulative reward as the original action-constrained MDP while providing incentives for a higher action acceptance rate.

\textbf{Remarks on the preference vector $\bm{\lambda}$.} Given the equivalence property in \Cref{prop:equiv}, to solve the AUTO-MDP, one could select a preference vector $\bm{\lambda}\in \Lambda$ and find a corresponding optimal policy $\pi^*_{\bm{\lambda}}$ such that $\bm{\lambda}^\top \bm{Q}(s, a; \pi^*_{\bm{\lambda}}) \geq \bm{\lambda}^\top \bm{Q}(s, a; \pi)$, for all $s \in \mathcal{S}, a \in \mathcal{A}$, and $\pi \in \Pi_{\mathcal{A}}$. In practice, different choices of $\bm{\lambda}$ lead to distinct learning behaviors: (i) If $\lambda_r$ is much larger than $\lambda_c$, then the violation penalty could be too small to enhance the acceptance rate of ARM. (ii) If $\lambda_r$ is much smaller than $\lambda_c$, then high violation penalty could make the policy very conservative and lead to low cumulative reward.

To find a proper $\bm{\lambda}$, one straightforward approach is to employ hyperparameter tuning over $\bm{\lambda}$, at the cost of several times more of environment steps. To address this, we propose a practical implementation of \proposedalgorithm based on multi-objective RL, which can learn well-performing policies for all preferences simultaneously, as described below.

\subsection{A Multi-Objective RL implementation of \proposedalgorithm}
\label{sec: morl}
This introduces a practical multi-objective RL implementation of \proposedalgorithm{}.
As described earlier, the two modifications ARM and AUTO-MDP are general in that they can be employed to adapt any standard deep RL algorithm to ACRL.
This work adopts SAC~\citep{haarnoja2018soft} as the backbone to showcase how to integrate the proposed modifications into an existing deep RL algorithm. 

\textbf{Solving AUTO-MDP via multi-objective RL.} To learn policies for all preferences simultaneously, we adapt the multi-objective SAC (MOSAC) presented in~\citep{hung2022q} to the AUTO-MDP:
\vspace{-3mm}
\begin{itemize} 
    \item 
    \textbf{Policy loss and critic loss.} MOSAC also adopts an actor-critic architecture as in vanilla SAC. Let $\phi$ and $\theta$ denote the parameters of the policy and the critic. MOSAC learns a preference-dependent policy $\pi_{\phi}(a\rvert s;\blambda)$ and the corresponding vector-valued Q function by a preference-dependent critic network $\bm{Q}_{\theta}(s,a;\blambda)$.  Given a state-action sampling distribution $\mu$, the critic network is updated iteratively by minimizing the following loss function
    \begin{align}
    J_{\bQ}(\theta;\blambda)=\mathbb{E}_{\left(s, a\right) \sim \mu}\left[\Big(\langle\blambda, \bQ_{\theta}\left(s, a;\pi_\phi, \blambda\right)-\left(\bm{r}\left(s, a\right)+\gamma \mathbb{E}_{s^\prime \sim \cP(\cdot\rvert s,a)}\left[\bV_{\bar{\theta}}\left(s^{\prime}; \pi_\phi, \blambda\right)\right]\right)\rangle\Big)^{2}\right],    \label{eq:critic_update}
    \end{align}
    where $\bm{V}_{\theta}(s;\pi_\phi,\blambda):=\mathbb{E}_{a\sim \pi_{\phi}(\cdot\rvert \cdot;\blambda)}[\bm{Q}_{\theta}(s,a;\pi_\phi,\blambda)-\alpha\log \pi_{\phi}(a\rvert s;\blambda)\boldsymbol{1}_d]$ with entropy coefficient $\alpha$, $\bar{\theta}$ denotes the parameters of the target critic network. 
    Regarding the policy update, the policy $\pi_\phi$ is updated by minimizing 
    \begin{align}
    J_{\pi}(\phi;\blambda)=\mathbb{E}_{ s \sim \mu}\bigg[\mathbb{E}_{ a \sim \pi_{\phi}}\Big[\alpha \log \left(\pi_{\phi}\left( a\mid s;\blambda\right)\right)-\langle\blambda,\bQ\left( s, a;\pi_\phi,\blambda\right)\rangle\Big]\bigg].
    \label{eq:policy_update}
    \end{align}
    Notably,~\Cref{eq:critic_update} and~\Cref{eq:policy_update} can be viewed as the critic loss and the policy loss of vanilla SAC under the scalarized Q function.
    
    \item %
    \textbf{Dual-buffer design.} Like vanilla SAC, MOSAC is an off-policy algorithm and makes policy and critic updates based on the samples from experience replay buffers. To better address the augmented transitions in AUTO-MDP, we propose a dual-buffer design, where we store the feasible transitions and the augmented infeasible transitions in two separate replay buffers, namely a real replay buffer $\mathcal{D}_r$ and an augmented replay buffer $\mathcal{D}_{a}$. This design offers more flexibility in balancing the number of updates by feasible and infeasible transitions, especially at the initial training stage when the action violation rate is high. 
    
    \item 
    \textbf{Preference distribution.} To update the policy and the critic for different preferences, the preference $\blambda$ is drawn from some distribution $\rho_{\blambda}$. One natural choice is to set $\rho_{\blambda}$ as a uniform distribution over the two-dimensional probability simplex, or essentially a Dirichlet distribution with concentration parameter equal to 1.
\end{itemize}
\vspace{-0.2cm}
The training process is illustrated in~\Cref{fg:ARS_simp}, with the pseudo code provided in~\Cref{al:pseudo-code}.
\vspace{-0.2cm}

\begin{figure}
\includegraphics[width=\linewidth]{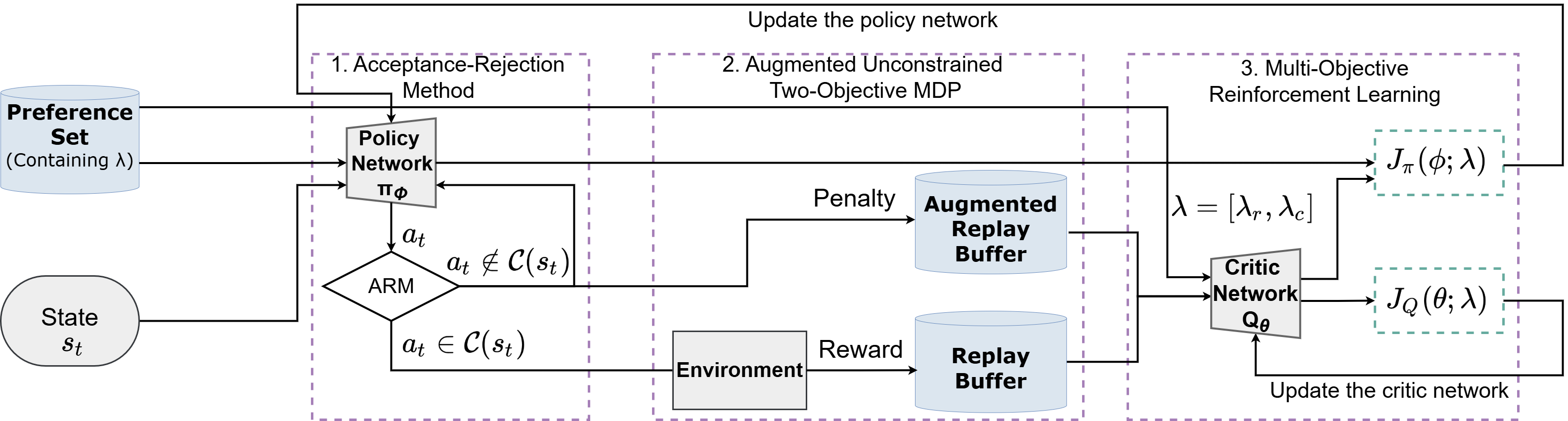}
\caption{\textbf{An illustration of the \proposedalgorithm framework.} \proposedalgorithm is composed of three components: (1) ARM: Use an oracle to verify whether the sampled action is in the feasible action set. (2) AUTO-MDP: Assign penalties to invalid actions within an augmented MDP framework, thereby reducing the rate of action violations. (3) MORL: Use MORL to discover well-performing policies under all penalty weights simultaneously.}
\label{fg:ARS_simp}
\end{figure}
\section{Experiments}
\vspace{-0.2cm}
\textbf{Benchmark Methods.} We compare \proposedalgorithm with various recent benchmark ACRL algorithms, including NFWPO, DPre+, SPre+, and FlowPG.
NFWPO~\citep{lin2021escaping} achieves favorable constraint satisfaction at the cost of high QP overhead as it enforces action constraints by Frank-Wolfe search. 
DPre+ and SPre+, proposed by~\citep{kasaura2023benchmarking}, adapt the vanilla DDPG~\citep{lillicrap2015continuous} and SAC~\citep{haarnoja2018soft} to ACRL by using a QP-based projection step for action post-processing, learning the critic with pre-projected actions, and applying a penalty term to guide the policy updates. 
For a fair comparison, we use the official implementation and the hyperparameter settings of DPre+, SPre+, and NFWPO provided by \citep{kasaura2023benchmarking}.
FlowPG enforces action constraints via a pre-trained Normalizing Flow model, and we use the official source code provided by~\citep{brahmanage2023flowpg}. For the testing of \proposedalgorithm, we set $\bm{\lambda}=[0.9,0.1]$ as the default input preference of the policy network $\pi_{\phi}(\cdot\rvert \bm{\lambda})$. Moreover, during the testing of all the above algorithms, an auxiliary projection step is employed to guarantee that actions used for environment interaction always satisfy the action constraints.
\vspace{-0.1cm}

\textbf{Evaluation Domains.}
We evaluate the algorithms in various benchmark domains widely used in the ACRL literature~\citep{lin2021escaping,kasaura2023benchmarking,brahmanage2023flowpg}: (i) \textit{MuJoCo locomotion tasks}~\citep{todorov2012mujoco}: These tasks involve training robots to achieve specified goals, such as running forward and controlling their speed within certain limits. (ii) \textit{Resource allocation for networked systems}: These tasks involve properly allocating resource under capacity constraints, including NSFnet and Bike Sharing System (BSS)~\citep{ghosh2017incentivizing}. For NSFnet, the learner needs to allocate packets of different flows to multiple communication links. The action constraints are induced by the per-link maximum total assigned packet arrival rate. We follow the configuration provided by \citep{lin2021escaping} and use the open-source network simulator from PCC-RL~\citep{jay2018internet}. For BSS, the environment consists of $m$ bikes and $n$ stations, each with a capacity limit of $c$. The learner needs to reallocate bikes to different stations based on the current situation. We follow the experimental scenario of \citep{lin2021escaping} and evaluate our approach on two tasks: BSS3z with $n = 3$ and $m = 90$ and BSS5z with $n = 5$ and $m = 150$. Their capacities are both set to 40. A detailed description about these tasks is provided in \cref{app:exp_env_setting}.

\begin{figure}
    \centering
    \includegraphics[width=0.75\textwidth]{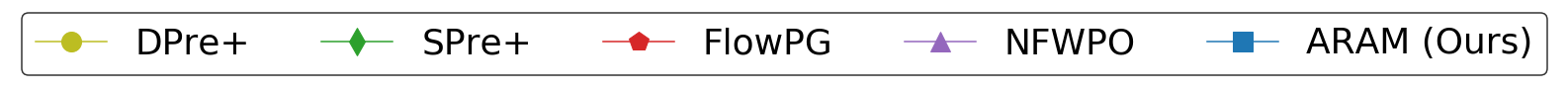}
    \begin{subfigure}{0.245\textwidth}
        \includegraphics[width=\linewidth]{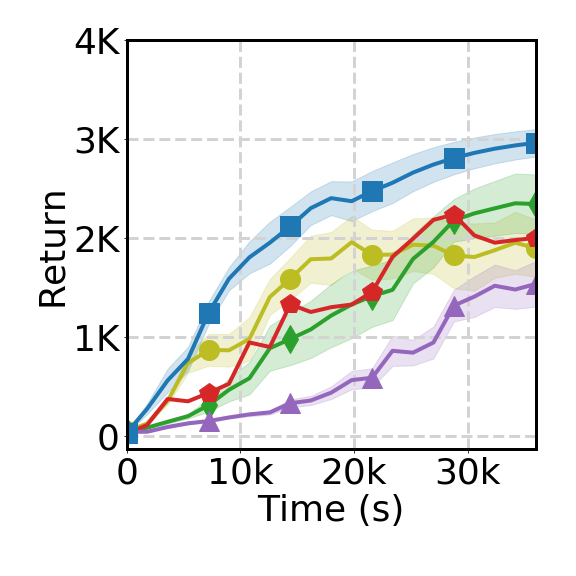}
        \caption{Hopper}
    \end{subfigure}
    \hfill
    \begin{subfigure}{0.245\textwidth}
        \includegraphics[width=\linewidth]{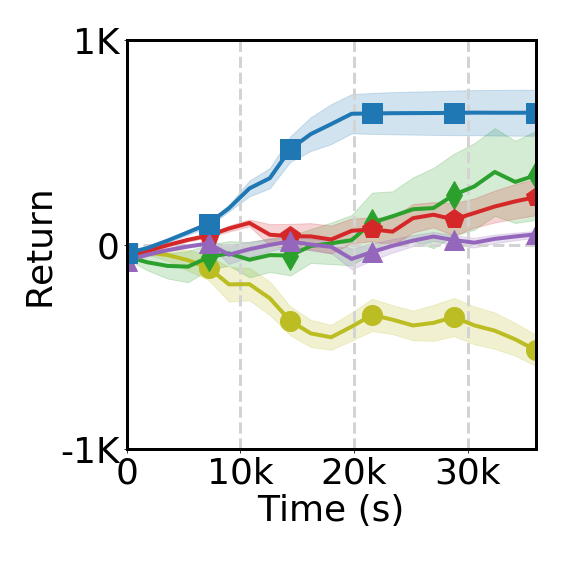}
        \caption{HopperVel}
    \end{subfigure}
    \hfill
    \begin{subfigure}{0.245\textwidth}
        \includegraphics[width=\linewidth]{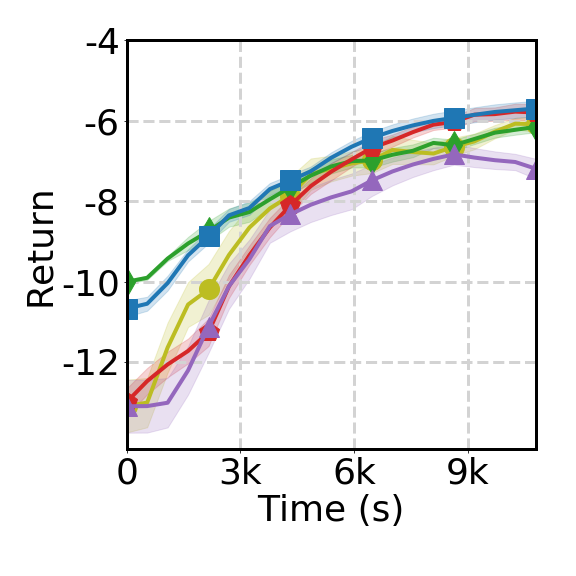}
        \caption{Reacher}
    \end{subfigure}
    \hfill        
    \begin{subfigure}{0.245\textwidth}
        \includegraphics[width=\linewidth]{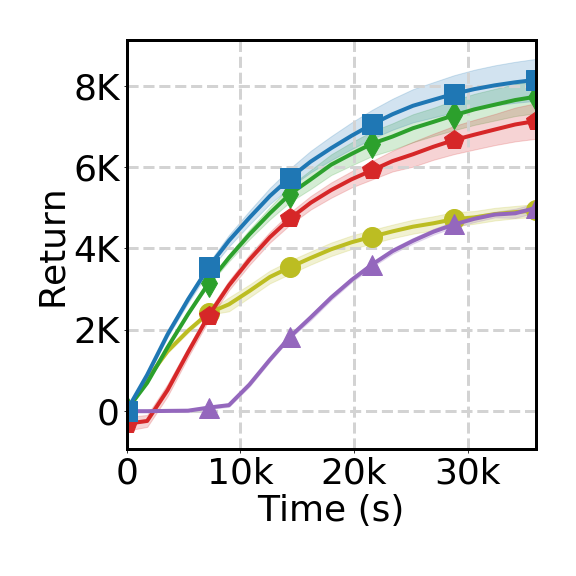}
        \caption{HalfCheetah}
    \end{subfigure}
    
    \vspace{-0.2cm}
    
    \medskip
    
    \begin{subfigure}{0.245\textwidth}
        \includegraphics[width=\linewidth]{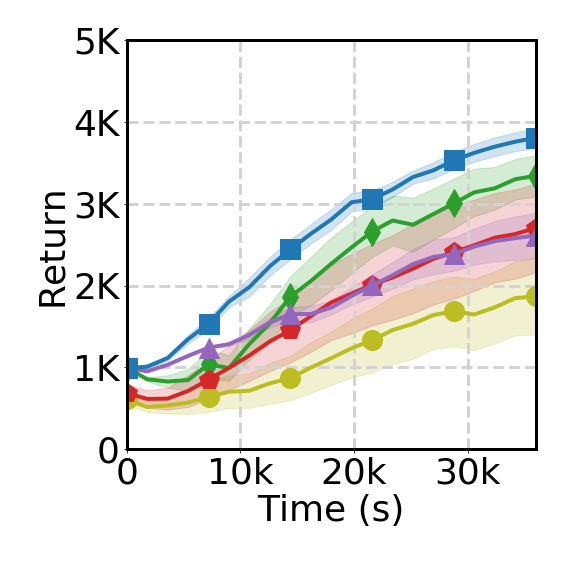}
        \caption{Ant}
    \end{subfigure}
    \hfill
    \begin{subfigure}{0.245\textwidth}
        \includegraphics[width=\linewidth]{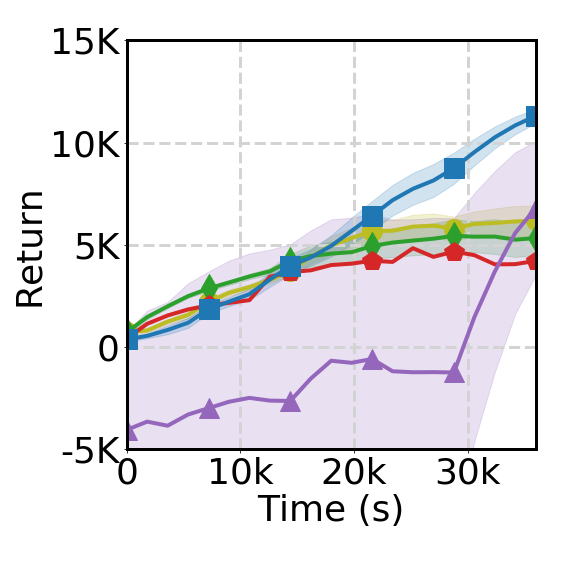}
        \caption{NSFnet}
    \end{subfigure}
    \hfill
    \begin{subfigure}{0.245\textwidth}
        \includegraphics[width=\linewidth]{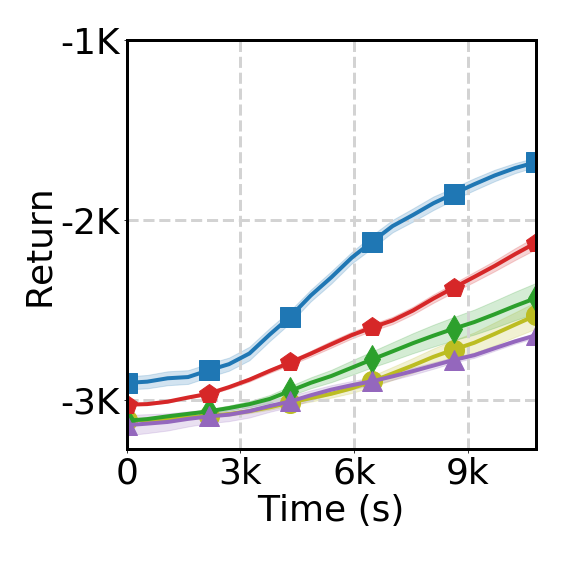}
        \caption{BSS3z}
    \end{subfigure}
    \hfill
    \begin{subfigure}{0.245\textwidth}
        \includegraphics[width=\linewidth]{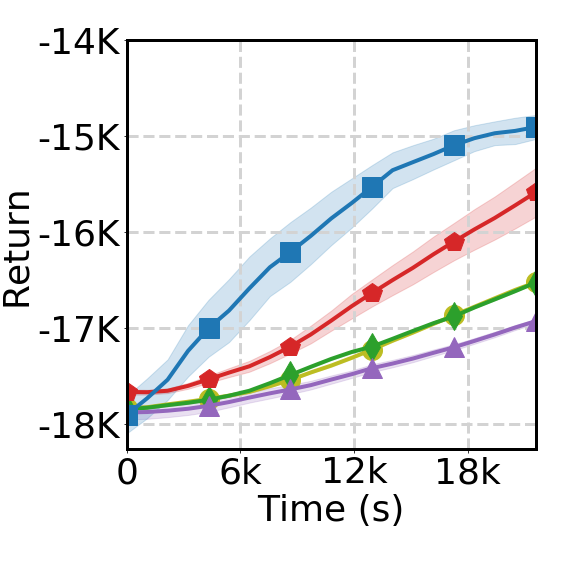}
        \caption{BSS5z}
    \end{subfigure}
    \vspace{-0.2cm}
    \caption{Learning curves of \proposedalgorithm and the benchmark methods across different environments in terms of the evaluation return.}
    \label{fg:average_return}
\end{figure}

\textbf{Performance Metrics.} We evaluate the performance in the following aspects:
\vspace{-2mm}

\begin{itemize}[leftmargin=*]
    \item \textit{Training efficiency}: We record the evaluation returns at different training stages, in terms of both the wall clock time and the environment steps. To ensure fair measurements of wall clock time, we run each algorithm independently using the same computing device. Moreover, we report the cumulative number of QP operations as an indicator of the training computational overhead.
    \vspace{-1mm}
    \item \textit{Valid action rate}: At the testing phase, we evaluate the valid action rate by sampling 100 actions from the policy network at each step of an episode. This metric reflects how effectively each method enforces the action constraints throughout the evaluation phase.
    \vspace{-1mm}
    \item \textit{Per-action inference time}: 
    The efficiency of action inference reflects the design complexity, such as the need for generative models and QP operations, of each ACRL method. During evaluation, we measure the per-action inference time for 1 million actions. This inference time serves as a critical metric for ACRL deployment.
\end{itemize}

\subsection{Experimental results}
\vspace{-0.1cm}
Our proposed method effectively reduces the use of costly QP operations, allowing us to address ACRL with a more lightweight framework. This subsection demonstrates the effectiveness of our method in training efficiency, valid action rate, and per-action inference time. Unless stated otherwise, all the results reported below are averaged over five random seeds.
\vspace{-0.1cm}

\textbf{Does the proposed method outperform other ACRL benchmark methods in cumulative rewards?} \cref{fg:average_return} shows the evaluation reward versus wall clock time, and \cref{table:mean_std_results} and \cref{fg:sample_accept_rate} (in \cref{app:exp}) show the valid action rates. We observe that \proposedalgorithm enjoys fast learning progress and high valid action rates simultaneously. On the other hand, due to the reliance on the projection layer, the projection-based methods like DPre+ and SPre+ have relatively very low valid action rates, which require a large number of QP operations and hence lead to a longer training time. Additionally, they perform poorly in resource allocation environments since the optimal solutions for these problems cannot be directly found through projection. Regarding NFWPO and FlowPG, these two methods can both effectively learn policies that satisfy the constraints but not necessarily achieve a high average return.
Moreover, the results regarding sample efficiency, measured by the evaluation return versus environment steps, can be found in \cref{app:exp}.

\vspace{-0.1cm}
\begin{table}[h!]
\caption{Comparison of \proposedalgorithm and the benchmark algorithms in terms of the valid action rate.}
\centering
\scalebox{0.7}{\begin{tabular}{cccccc}
\toprule
Environment & DPre+ & SPre+ & NFWPO & FlowPG & \proposedalgorithm (Ours) \\
\midrule
Hopper       & \textbf{0.97} $\pm$ \textbf{0.03} & $0.40 \pm 0.01$ & \textbf{0.99} $\pm$ \textbf{0.01} & \textbf{0.98} $\pm$ \textbf{0.01} & \textbf{0.99} $\pm$ \textbf{0.01} \\
HopperVel    & $0.38 \pm 0.30$ & $0.60 \pm 0.09$ & \textbf{0.94} $\pm$ \textbf{0.03} & $0.71 \pm 0.19$ & $0.80 \pm 0.09$ \\
Reacher      & $0.25 \pm 0.16$ & $0.93 \pm 0.03$ & \textbf{0.97} $\pm$ \textbf{0.02} & \textbf{0.95} $\pm$ \textbf{0.01} & \textbf{0.98} $\pm$ \textbf{0.01} \\
HalfCheetah  & $0.35 \pm 0.06$ & $0.78 \pm 0.19$ & \textbf{0.97} $\pm$ \textbf{0.01} & $0.78 \pm 0.17$ & $0.78 \pm 0.29$ \\
Ant          & $0.41 \pm 0.38$ & $0.29 \pm 0.07$ & \textbf{0.99} $\pm$ \textbf{0.01} & $0.83 \pm 0.17$ & \textbf{1.00} $\pm$ \textbf{0.00} \\
NSFnet       & $0.00 \pm 0.00$ & $0.04 \pm 0.01$ & $0.92 \pm 0.05$ & \textbf{0.98} $\pm$ \textbf{0.01} & \textbf{0.94} $\pm$ \textbf{0.04} \\
BSS3z        & $0.25 \pm 0.02$ & $0.28 \pm 0.24$ & \textbf{0.73} $\pm$ \textbf{0.18} & $0.59 \pm 0.20$ & \textbf{0.72} $\pm$ \textbf{0.24} \\
BSS5z        & $0.13 \pm 0.11$ & $0.31 \pm 0.16$ & \textbf{0.84} $\pm$ \textbf{0.14} & $0.68 \pm 0.17$ & 0.77 $\pm$ \textbf{0.16} \\
\bottomrule
    \vspace{-0.8cm}
\end{tabular}}
\label{table:mean_std_results}
\end{table}

\textbf{Does our proposed method achieve a lower training and inference overhead?}
QP operations are known to be computationally costly and thereby account for a substantial fraction of training time in many ACRL methods. \cref{fg:qp_count} presents the log-scale plot of QP usage under different algorithms. It is evident that \proposedalgorithm exhibits significantly lower QP computation compared to others. Due to its reduced dependency on QP operations, \proposedalgorithm achieves higher training efficiency.

Moreover, \cref{table:policy_evaluation} shows the average per-action inference time during evaluation. \proposedalgorithm benefits from computationally efficient action inference due to its almost QP-free design, as only a minimal subset of policy output actions that violate constraints requires the QP operator. By contrast, the projection-based methods like DPre+ and SPre+ and the flow-based FlowPG all suffer from much higher per-action inference time.

\vspace{-0.1cm}
\begin{table}[h!]
\caption{Comparison of \proposedalgorithm and the benchmark algorithms in terms of the evaluation time. The evaluation times are calculated through one million evaluation steps.}
\centering
\scalebox{0.75}{\begin{tabular}{cccccc}
\toprule
Environment & DPre+ (s) & SPre+ (s) & NFWPO (s) & FlowPG (s) & \proposedalgorithm (Ours) (s)\\
\midrule
Reacher & $71.47 \pm 5.63$ & $303.01 \pm 60.81$ & $67.43 \pm 5.02$ & $71.47 \pm 3.12$ & \textbf{44.72} $\pm$ \textbf{0.93} \\
HalfCheetah & $211.67 \pm 48.70$ & $253.53 \pm 30.48$ & $180.41 \pm 10.57$ & $210.63 \pm 7.51$ & \textbf{57.14} $\pm$ \textbf{8.13} \\
Hopper & $99.43 \pm 6.71$ & $175.43 \pm 9.47$ & $88.71 \pm 11.41$ & $103.12 \pm 5.13$ & \textbf{61.72} $\pm$ \textbf{2.15} \\
HopperVel & $177.31 \pm 15.73$ & $97.12 \pm 7.95$ & $82.63 \pm 7.13$ & $91.43 \pm 5.17$ & \textbf{63.81} $\pm$ \textbf{10.04} \\
\bottomrule
\end{tabular}}
\label{table:policy_evaluation}
\end{table}


\begin{figure}
    \centering
    \includegraphics[width=0.75\textwidth]{Figure_exp_2/exp_main_col1.png}
    \centering 
    \begin{subfigure}{0.245\textwidth}
        \includegraphics[width=\linewidth]{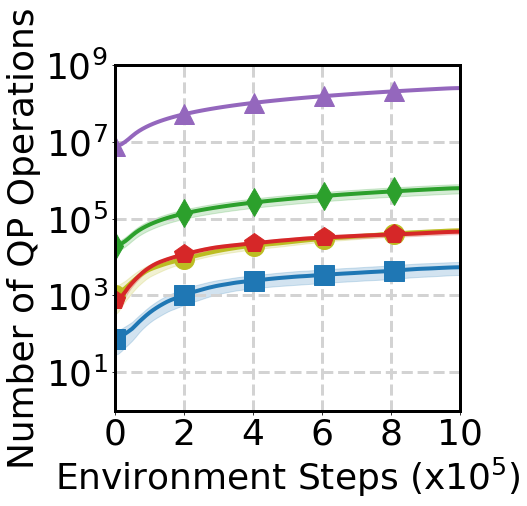}
        \caption{Hopper}
    \end{subfigure}
    \hfill
        \begin{subfigure}{0.245\textwidth}
        \includegraphics[width=\linewidth]{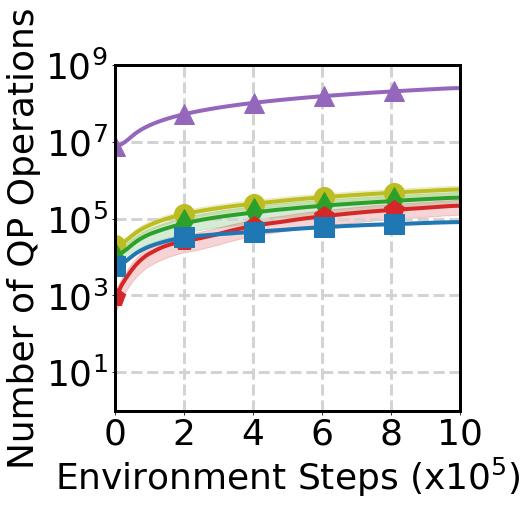}
        \caption{HopperVel}
    \end{subfigure}
    \hfill
    \begin{subfigure}{0.245\textwidth}
        \includegraphics[width=\linewidth]{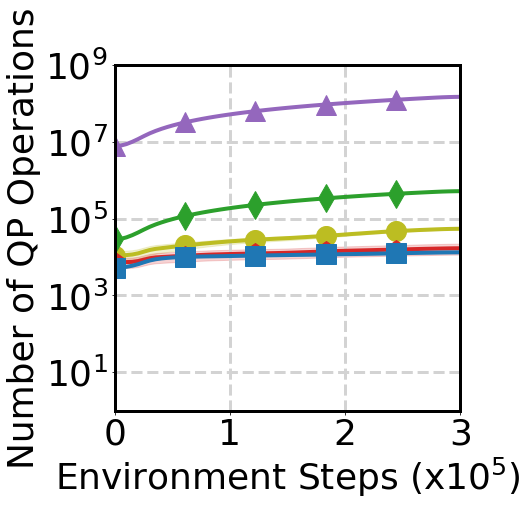}
        \caption{Reacher}
    \end{subfigure} 
    \hfill
        \begin{subfigure}{0.245\textwidth}
        \includegraphics[width=\linewidth]{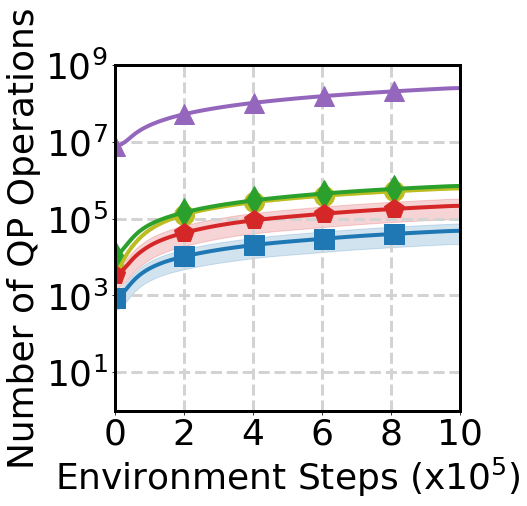}
        \caption{HalfCheetah}
    \end{subfigure}
    \medskip
    \begin{subfigure}{0.245\textwidth}
        \includegraphics[width=\linewidth]{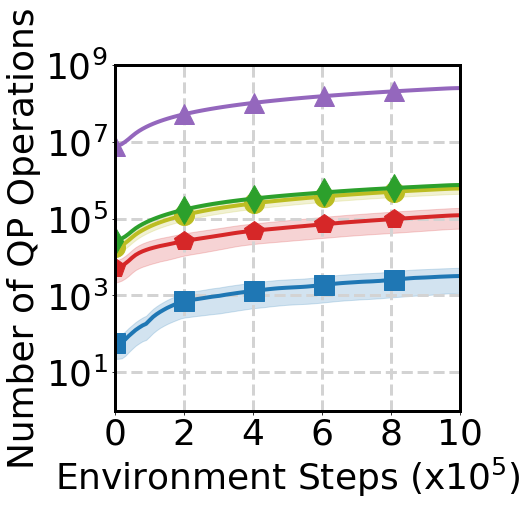}
        \caption{Ant}
    \end{subfigure}
    \hfill
   \begin{subfigure}{0.245\textwidth}
        \includegraphics[width=\linewidth]{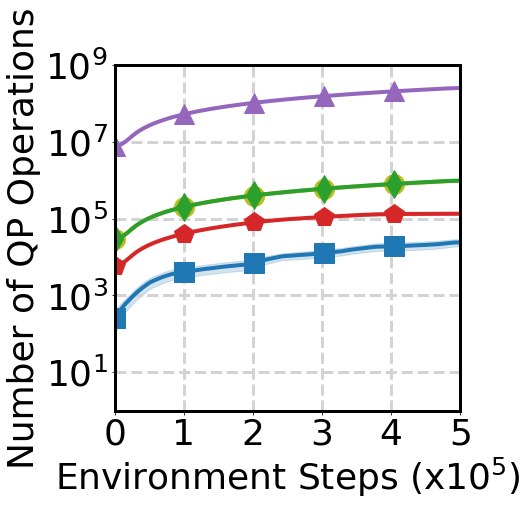}
        \caption{NSFnet}
    \end{subfigure}
\hfill
   \begin{subfigure}{0.245\textwidth}
        \includegraphics[width=\linewidth]{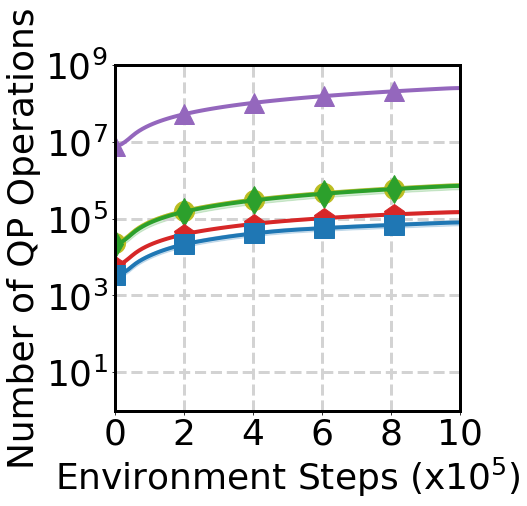}
        \caption{BSS3z}
    \end{subfigure}    
    \hfill
   \begin{subfigure}{0.245\textwidth}
        \includegraphics[width=\linewidth]{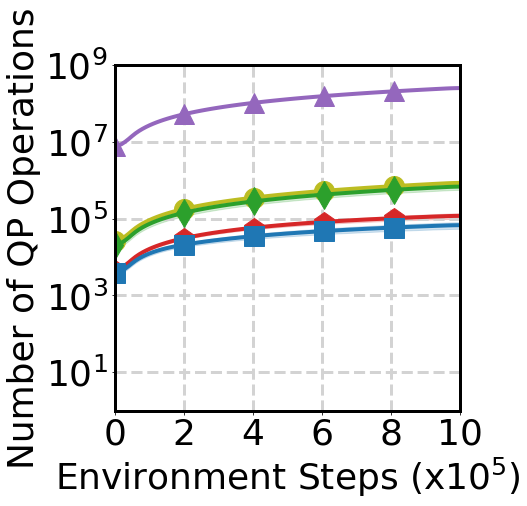}
        \caption{BSS5z}
    \end{subfigure}
    \vspace{-0.4cm}
    \caption{Cumulative number of QP operations of \proposedalgorithm and the benchmark methods across different environments, with the y-axis on a logarithmic scale.}
\label{fg:qp_count}
\end{figure}


\vspace{-0.1cm}
\textbf{Comparison: ARAM versus RL for constrained MDPs with projection.} To make the comparison even more comprehensive, we also adapt FOCOPS \citep{zhang2020first}, which is a popular RL approach designed for constrained MDPs to address long-term discounted cost, to ACRL by adding a projection step. \Cref{tab:focops_vs_ours_er} and \Cref{tab:focops_vs_ours_ac} present a comparison of ARAM and FOCOPS in the evaluation rewards and the valid action rates. These results suggest that ACRL requires fundamentally different solutions from RL for constrained MDPs.


\vspace{-0.2cm}
\textbf{Ablation study on MORL.} To investigate the benefits of using MORL, we perform an ablation study that compares the MORL implementation with a single-objective variant with a fixed preference (termed SOSAC below). From \Cref{fg:ablation_study_morl_scatter}, MORL can discover policies with both competitive final forward reward and favorable constraint satisfaction due to the implicit knowledge sharing across preferences during training. By contrast, SOSAC under a fixed preference fails to meet both criteria simultaneously, and this suggests that direct hyperparameter tuning can be rather ineffective. We also provide the learning curves in \Cref{fg:ablation_study_morl_sample} in \Cref{app:exp}.

\vspace{-0.3cm}
\begin{figure}[ht]
    \centering
    \begin{subfigure}[t]{0.44\linewidth}
        \centering
        \includegraphics[width=0.7\linewidth]{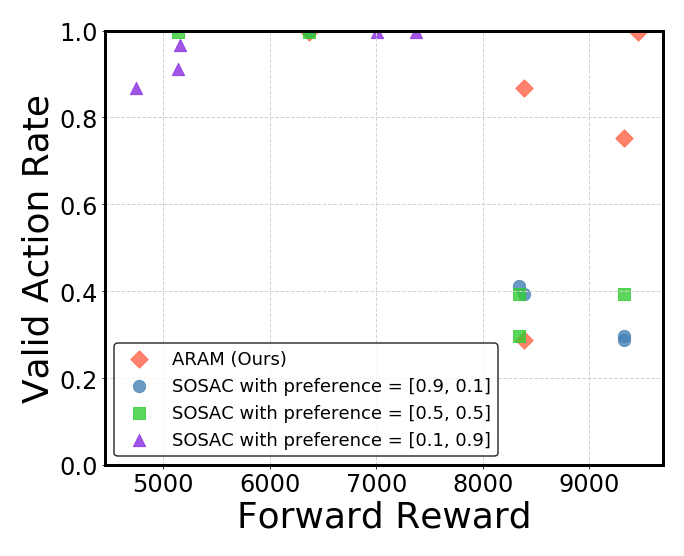}
        \vspace{-0.2cm}
        \caption{HalfCheetah}
        \label{fg:ablation_study_morl_scatter_halfcheetah}
    \end{subfigure}
    \hspace{-20pt}
    \begin{subfigure}[t]{0.44\linewidth}
        \centering
        \includegraphics[width=0.7\linewidth]{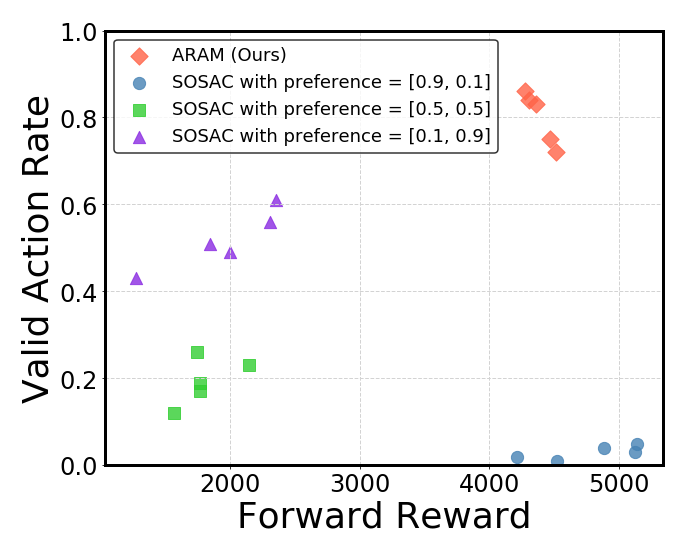}
        \vspace{-0.2cm}
        \caption{Ant}
        \label{fg:ablation_study_scatter_morl_ant}
    \end{subfigure}
    \caption{\textbf{Ablation study of MORL.} We plot the tuples of forward reward and valid action rate of \proposedalgorithm and the three single-objective SAC variants with fixed preferences $\bm{\lambda}=[0.9, 0.1]$, $[0.5, 0.5]$, and $[0.1, 0.9]$ in HalfCheetah and Ant. The five markers of each color refer to the results of the same algorithm over five distinct random seeds.}
    \label{fg:ablation_study_morl_scatter}
\end{figure}

\begin{table}[h!]
\begin{minipage}{0.45\textwidth}
    \centering
    \caption{Comparison of \proposedalgorithm and FOCOPS in terms of the evaluation return.}
    \scalebox{0.7}{
    \begin{tabular}{ccc}
        \toprule
        \rule{0pt}{0.9em} Environment & FOCOPS & \proposedalgorithm (Ours)\\
        \midrule
        \rule{0pt}{0.9em} Hopper ($\times 10^{3}$) & $2.27 \pm 0.41$ & \textbf{3.07} $\pm$ \textbf{0.24} \\
        \rule{0pt}{0.9em} HopperVel ($\times 10^{2}$) & $0.07 \pm 3.13$ & \textbf{6.49} $\pm$ \textbf{2.31} \\
        \rule{0pt}{0.9em} Reacher ($\times 10^{0}$) & $-5.80 \pm 1.14$ & \textbf{-4.78} $\pm$ \textbf{0.33} \\
        \rule{0pt}{0.9em} HalfCheetah ($\times 10^{3}$) & $6.08 \pm 1.87$ & \textbf{8.38} $\pm$ \textbf{1.11} \\
        \rule{0pt}{0.9em} Ant ($\times 10^{3}$) & $3.06 \pm 1.11$ & \textbf{5.00} $\pm$ \textbf{0.32} \\
        \rule{0pt}{0.9em} NSFnet ($\times 10^{4}$) & $0.68 \pm 0.16$ & \textbf{1.32} $\pm$ \textbf{0.08} \\
        \rule{0pt}{0.9em} BSS3z ($\times 10^{3}$) & $-1.93 \pm 0.20$ & \textbf{-1.65} $\pm$ \textbf{0.04} \\
        \rule{0pt}{0.9em} BSS5z ($\times 10^{4}$) & $-1.61 \pm 0.05$ & \textbf{-1.51} $\pm$ \textbf{0.02} \\
        \bottomrule
    \end{tabular}}
        \label{tab:focops_vs_ours_er}
\end{minipage}%
\hfill
\begin{minipage}{0.45\textwidth}
    \centering
    \caption{Comparison of \proposedalgorithm and FOCOPS in terms of the valid action rate.}
    \scalebox{0.7}{
    \begin{tabular}{ccc}
        \toprule
        \rule{0pt}{0.9em} Environment & FOCOPS & \proposedalgorithm (Ours)\\
        \midrule
        \rule{0pt}{0.9em} Hopper & $0.22 \pm 0.15$ & \textbf{0.99} $\pm$ \textbf{0.01} \\
        \rule{0pt}{0.9em} HopperVel & $0.32 \pm 0.11$ & \textbf{0.80} $\pm$ \textbf{0.09} \\
        \rule{0pt}{0.9em} Reacher & $0.87 \pm 0.10$ & \textbf{0.98} $\pm$ \textbf{0.01} \\
        \rule{0pt}{0.9em} HalfCheetah & $0.59 \pm 0.13$ & \textbf{0.78} $\pm$ \textbf{0.29} \\
        \rule{0pt}{0.9em} Ant & $0.26 \pm 0.09$ & \textbf{1.00} $\pm$ \textbf{0.00} \\
        \rule{0pt}{0.9em} NSFnet & $0.46 \pm 0.28$ & \textbf{0.94} $\pm$ \textbf{0.04} \\
        \rule{0pt}{0.9em} BSS3z & $0.45 \pm 0.19$ & \textbf{0.72} $\pm$ \textbf{0.24} \\
        \rule{0pt}{0.9em} BSS5z & $0.16 \pm 0.13$ & \textbf{0.77} $\pm$ \textbf{0.16} \\
        \bottomrule
    \end{tabular}}
	\label{tab:focops_vs_ours_ac}
 
\end{minipage}
\label{tab:focops_vs_ours}
\end{table}

\vspace{-0.1cm}
\section{Conclusion}
\vspace{-0.3cm}
{In this paper, we introduced \proposedalgorithm, a novel framework designed to address ACRL problems by augmenting standard deep RL algorithms. By employing the acceptance-rejection method and an augmented MDP, \proposedalgorithm effectively reduces the need for costly QP operations and improves valid action rates. Our experimental results demonstrate that \proposedalgorithm can simultaneously achieve faster learning progress and require significantly fewer QP operations than the existing ACRL methods.}

\label{sec:conclusion}
\section*{Acknowledgments}
This material is based upon work partially supported by National Science and Technology Council (NSTC), Taiwan under Contract No. NSTC 113-2628-E-A49-026, and the Higher Education Sprout Project of the National Yang Ming Chiao Tung University and Ministry of Education, Taiwan.
Shao-Hua Sun was supported by the Yushan Fellow Program by the Ministry of Education, Taiwan.
We also thank the National Center for High-performance Computing (NCHC) for providing computational and storage resources.

\renewcommand{\bibname}{References}
\bibliography{iclr2025_conference}
\bibliographystyle{iclr2025_conference}

\clearpage
\newpage
\section*{Appendices}

\appendix

\begingroup
\hypersetup{colorlinks=false, linkcolor=black}
\hypersetup{pdfborder={0 0 0}}

\vspace{-1.6cm}
\part{} 
\parttoc 


\endgroup


\section{Proof of \Cref {prop:equiv}}
\label{appendix:proof}
For ease of exposition, we restate the proposition as follows.
\equivprop*
\begin{proof}
    Let us first fix a preference vector $\bm{\lambda}\in \Lambda$. Recall that $\bm{\lambda}=[\lambda_r,\lambda_c]$.
    Let $\pi_{\bm{\lambda}}^*\in \Pi_{\mathcal{A}}$ be a deterministic optimal policy of the AUTO-MDP under the preference vector $\bm{\lambda}$. By the classic literature of MDPs, we know such a $\pi_{\bm{\lambda}}^*$ must exist.
    Then, we have
    \begin{equation}
        \langle \bm{\lambda},\bm{Q}(s,a;\pi_{\bm{\lambda}}^*)\rangle\geq \langle \bm{\lambda},\bm{Q}(s,a;\pi)\rangle,
    \end{equation}
    for all $s\in \mathcal{S}$, for all $a\in \mathcal{A}$, and for all $\pi\in \Pi_{\mathcal{A}}$. In the sequel, we slightly abuse the notation and use $\pi(s)$ to denote the deterministic action taken at state $s$ by a deterministic policy $\pi$.

    To prove the proposition, we just need to show that such a $\pi_{\bm{\lambda}}^*$ must also be in the set of feasible policies $\Pi_{\mathcal{C}}$.
    We prove this by contradiction. Suppose $\pi_{\bm{\lambda}}^*\notin \Pi_{\mathcal{C}}$. Then, there must exist a state $\bar{s}\in \mathcal{S}$ such that $\pi_{\bm{\lambda}}^*(\bar{s})\notin \mathcal{C}(s)$.
    Then, we know
    \begin{align}
        \langle \bm{\lambda}, \bm{Q}(\bar{s}, \pi_{\bm{\lambda}}^*(\bar{s}); \pi_{\bm{\lambda}}^*)\rangle
        &= \langle \bm{\lambda}, [0, -\mathcal{K}]^\top+\gamma \bm{Q}(\bar{s}, \pi_{\bm{\lambda}}^*(\bar{s}); \pi_{\bm{\lambda}}^*)\rangle \label{eq:prop1-1}\\
        &= \lambda_c \cdot \frac{-\mathcal{K}}{1-\gamma},
        \label{eq:prop1-2}
    \end{align}
    where \Cref{eq:prop1-1} follows from the Bellman equation and the self-loop transitions of AUTO-MDP, and \Cref{eq:prop1-2} holds by recursively rolling out the self-loop transitions.
    Let us pick a feasible action $a\in \mathcal{C}(s)$. We know such an action $a$ must exist as $\mathcal{C}(s)$ is assumed non-empty. Then, we have
    \begin{align}
        \langle \bm{\lambda}, \bm{Q}(\bar{s}, a; \pi_{\bm{\lambda}}^*)\rangle
        &=\langle \bm{\lambda}, [r(\bar{s},a), 0]^\top+\gamma \mathbb{E}_{s'\sim \tilde{\mathcal{P}}(\cdot\rvert \bar{s},a)}\big[\bm{Q}({s}', \pi_{\bm{\lambda}}^*({s}'); \pi_{\bm{\lambda}}^*)\big]\rangle \label{eq:prop1-3}\\
        &\geq \lambda_r \cdot r(\bar{s},a)+\gamma \lambda_c\cdot\frac{-\mathcal{K}}{1-\gamma}\label{eq:prop1-4}\\
        &> \lambda_c \cdot \frac{-\mathcal{K}}{1-\gamma},\label{eq:prop1-5}
    \end{align}
    where \Cref{eq:prop1-3} follows from the Bellman equation, \Cref{eq:prop1-4} holds by considering the worst-case reward and penalty, and \Cref{eq:prop1-5} holds due to the non-negativity of the reward function and that $\gamma <1$.
    Therefore, by combining \Cref{eq:prop1-2} and \Cref{eq:prop1-5}, we know $\bm{Q}_{\bm{\lambda}}(\bar{s}, a; \pi_{\bm{\lambda}}^*)\equiv\langle \bm{\lambda}, \bm{Q}(\bar{s}, a; \pi_{\bm{\lambda}}^*)\rangle > \langle \bm{\lambda}, \bm{Q}(\bar{s}, \pi_{\bm{\lambda}}^*(\bar{s}); \pi_{\bm{\lambda}}^*)\rangle\equiv \bm{Q}_{\bm{\lambda}}(\bar{s}, \pi_{\bm{\lambda}}^*(\bar{s}); \pi_{\bm{\lambda}}^*)$. Then, by the argument of the standard one-step greedy policy improvement, we know $\pi_{\bm{\lambda}}^*$ can be improved by taking action $a$ at state $\bar{s}$ instead. Hence, $\pi_{\bm{\lambda}}^*$ cannot be an optimal policy. This completes the proof.
\end{proof}

\section{Detailed Architecture of \proposedalgorithm}
\CAM{Our implementation is based on Q-Pensieve~\citep{hung2022q}, \cref{fg:ARS} illustrates the complete training process of \proposedalgorithm. MORL aims to optimize policies under varying preferences, which poses challenges in sample efficiency. To address this, Q-Pensieve introduces a memory-sharing technique via the Q Replay Buffer, which stores past critic-network snapshots. This approach enables leveraging past experiences effectively, improving sample efficiency by allowing the agent to revisit and learn from previous interactions across different preference settings. By incorporating the Q Replay Buffer, \proposedalgorithm facilitates efficient policy learning, allowing the agent to balance performance improvement and constraint satisfaction.}

\begin{figure}[!h]
\centering
        \includegraphics[width=1\linewidth]{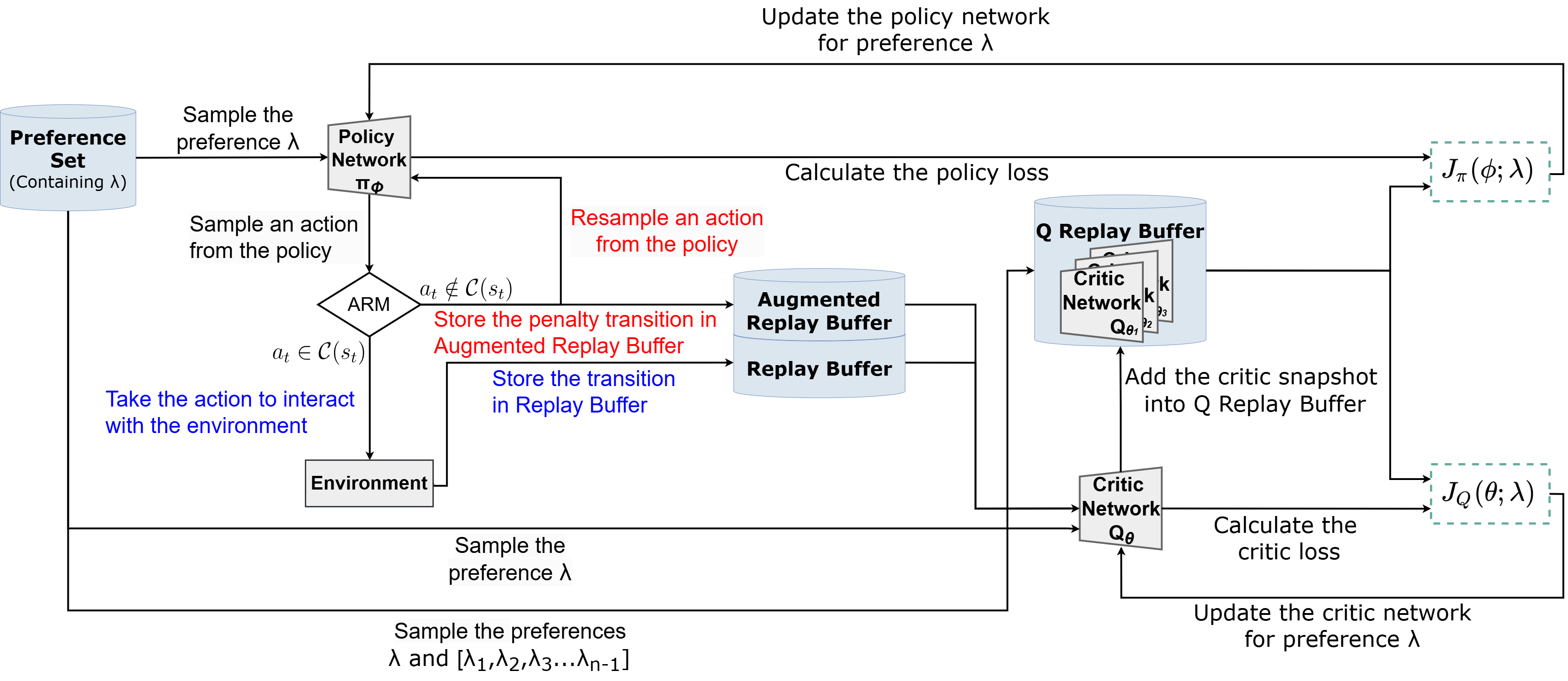}
    \caption{\textbf{Overview of the training process of \proposedalgorithm.} When an action is sampled from the policy, if it belongs to the feasible set, the transition is stored in the Replay Buffer (blue). Otherwise, the corresponding penalty transition is stored in the Augmented Replay Buffer, and another action is resampled (red). These stored transitions are then used to update the policy and critic networks.}
\label{fg:ARS}
\end{figure}

\newpage

\section{Additional Experimental Results}
\label{app:exp}
In this section, we compare \proposedalgorithm with other baselines, including action-constrained RL and RL for Constrained MDPs. The learning curves demonstrate that \proposedalgorithm achieves better sample efficiency. Additionally, we conduct an ablation study on the MORL framework, investigating how varying preferences within MORL impact the performance of \proposedalgorithm.
\subsection{Comparison: \proposedalgorithm versus the benchmark algorithms in Terms of Sample Efficiency}
\textbf{Does \proposedalgorithm have better sample efficiency?} 
\Cref{fg:sample_average_return} shows the evaluation rewards, and \Cref{fg:sample_accept_rate} displays the action acceptance rates. We can observe that \proposedalgorithm achieves consistently the best reward performance while maintaining a low action violation rate. 
On the other hand, FlowPG and NFWPO can effectively control the actions within the feasible set, but their reward performance appears lower. In contrast, QP-based methods exhibit a significantly higher rate of action violations.
\begin{figure}[!h]
    \centering
    \includegraphics[width=\textwidth]{Figure_exp_2/exp_main_col1.png}
    \centering  
        \begin{subfigure}{0.24\textwidth}
        \includegraphics[width=\linewidth]{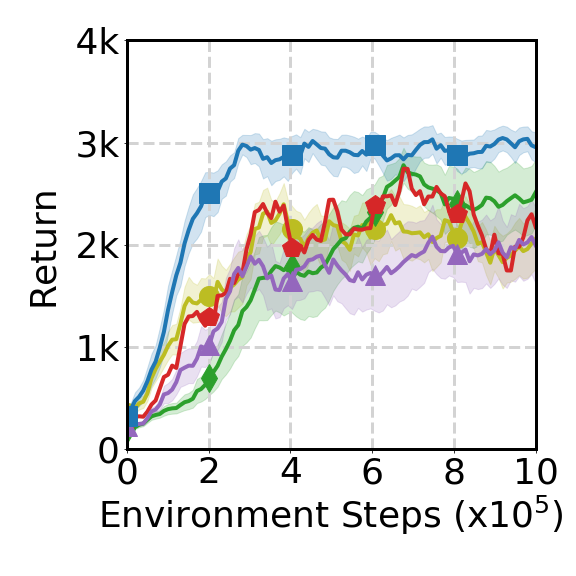}
        \caption{Hopper}
    \end{subfigure}
    \hfill
        \begin{subfigure}{0.24\textwidth}
        \includegraphics[width=\linewidth]{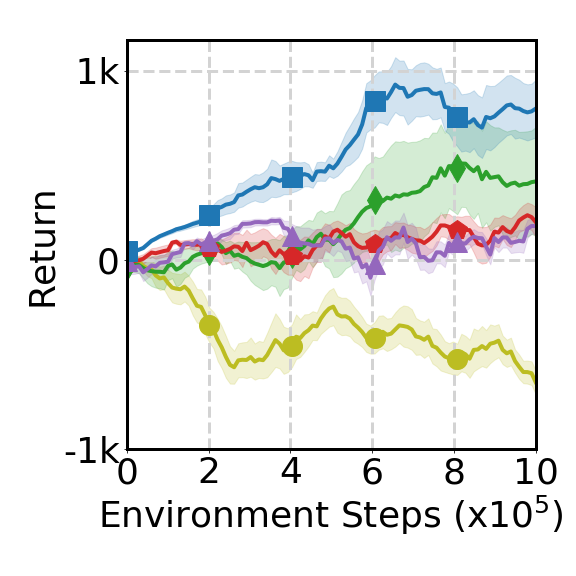}
        \caption{HopperVel}
    \end{subfigure}
    \hfill
    \begin{subfigure}{0.24\textwidth}
        \includegraphics[width=\linewidth]{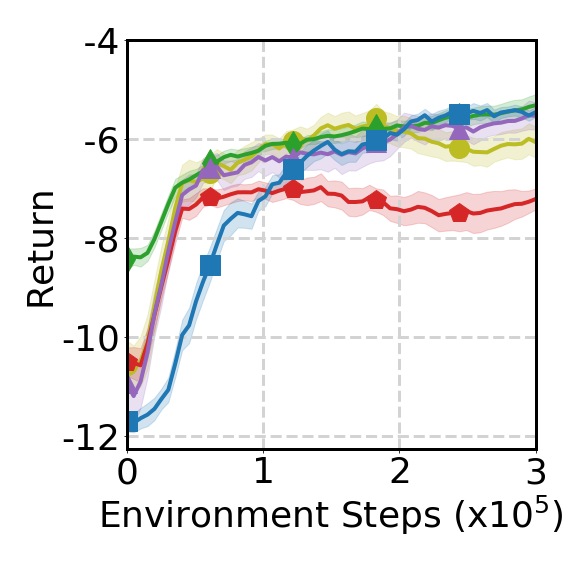}
        \caption{Reacher}
    \end{subfigure}
    \hfill  
        \begin{subfigure}{0.24\textwidth}
        \includegraphics[width=\linewidth]{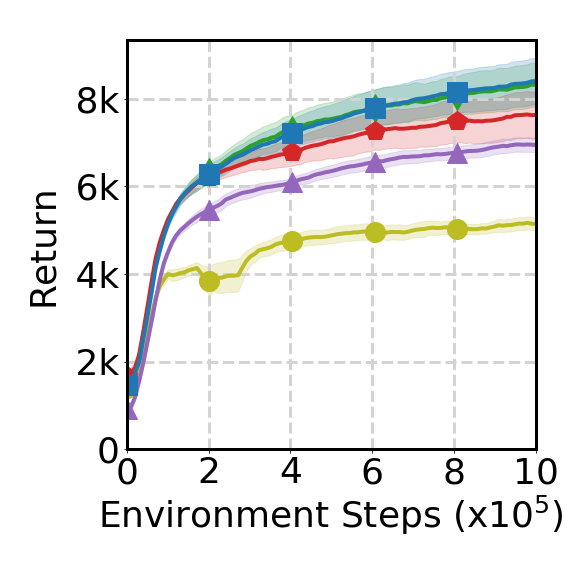}
        \caption{HalfCheetah}
    \end{subfigure}
   \medskip
    \begin{subfigure}{0.24\textwidth}
        \includegraphics[width=\linewidth]{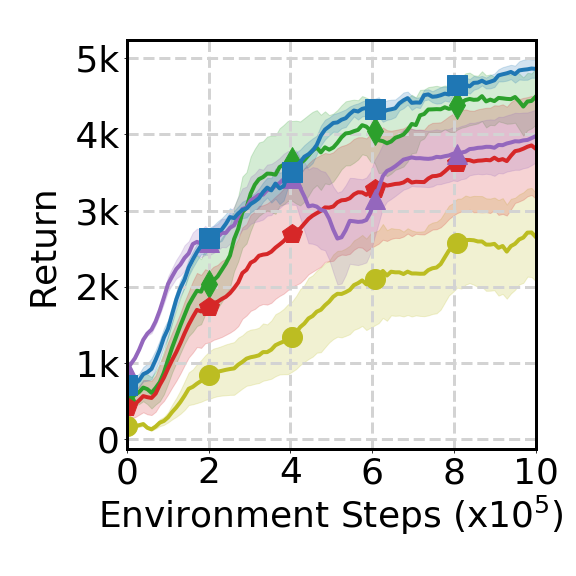}
        \caption{Ant}
    \end{subfigure}
    \hfill   
    \begin{subfigure}{0.24\textwidth}
        \includegraphics[width=\linewidth]{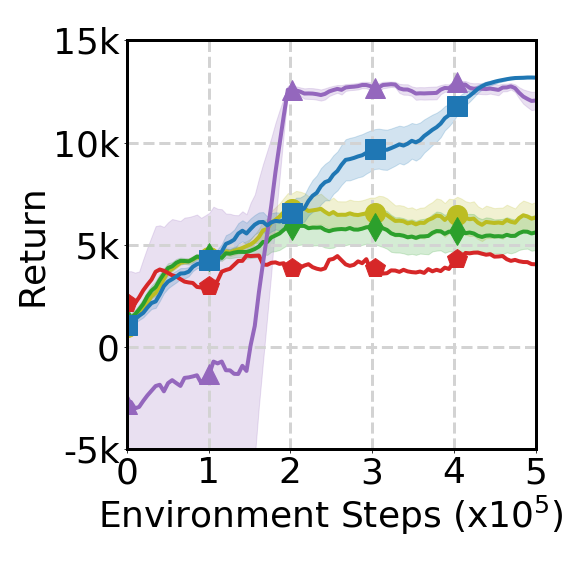}
        \caption{NSFnet}
    \end{subfigure}
    \hfill
    \begin{subfigure}{0.24\textwidth}
        \includegraphics[width=\linewidth]{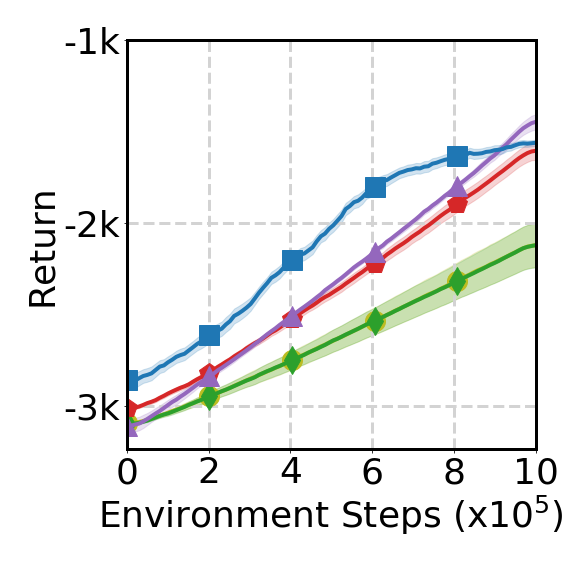}
        \caption{BSS3z}
    \end{subfigure}
    \hfill
        \begin{subfigure}{0.24\textwidth}
        \includegraphics[width=\linewidth]{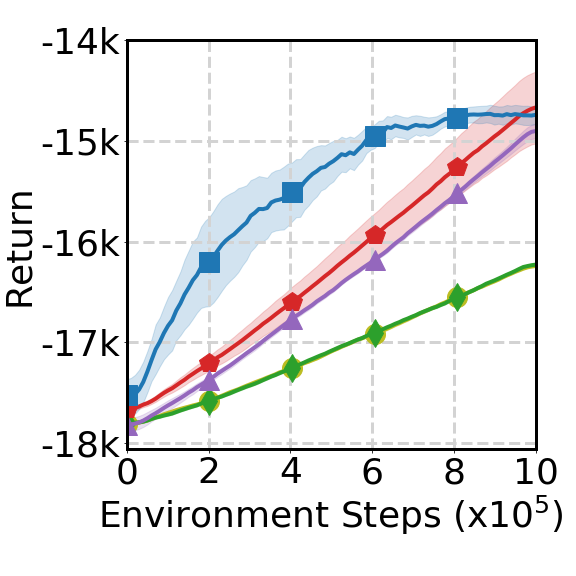}
        \caption{BSS5z}
    \end{subfigure}
    \hfill
    \caption{Learning curves of ARAM and the benchmark algorithms across different environments in terms of the evaluation return.}
\label{fg:sample_average_return}
\end{figure}

\begin{figure}
    \centering
    \includegraphics[width=\textwidth]{Figure_exp_2/exp_main_col1.png}
    \centering  
        \begin{subfigure}{0.24\textwidth}
        \includegraphics[width=\linewidth]{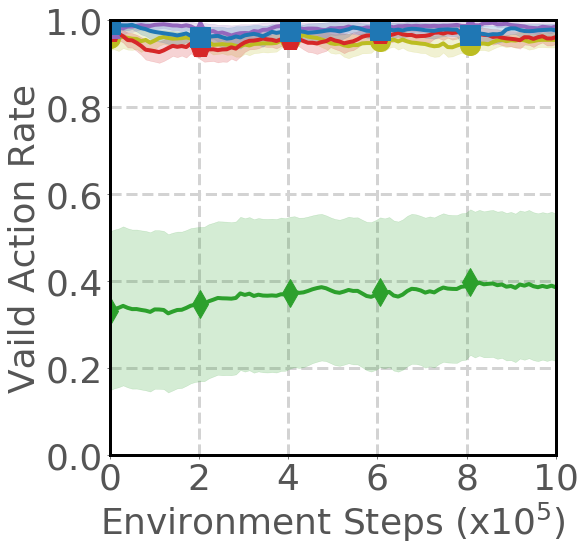}
        \caption{Hopper}
    \end{subfigure}
    \hfill
        \begin{subfigure}{0.24\textwidth}
        \includegraphics[width=\linewidth]{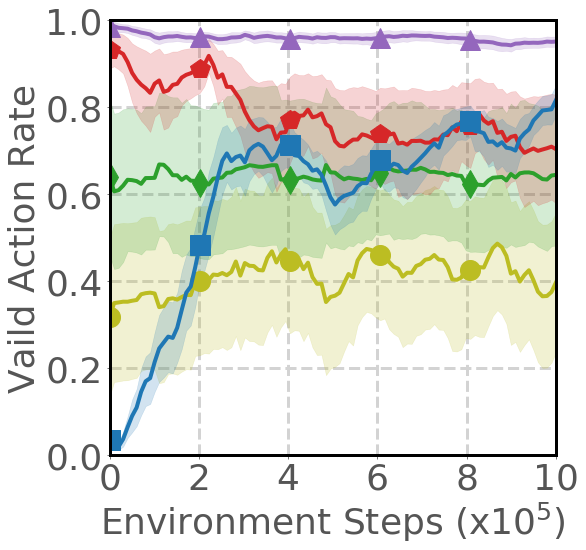}
        \caption{HopperVel}
    \end{subfigure}
    \hfill
    \begin{subfigure}{0.24\textwidth}
        \includegraphics[width=\linewidth]{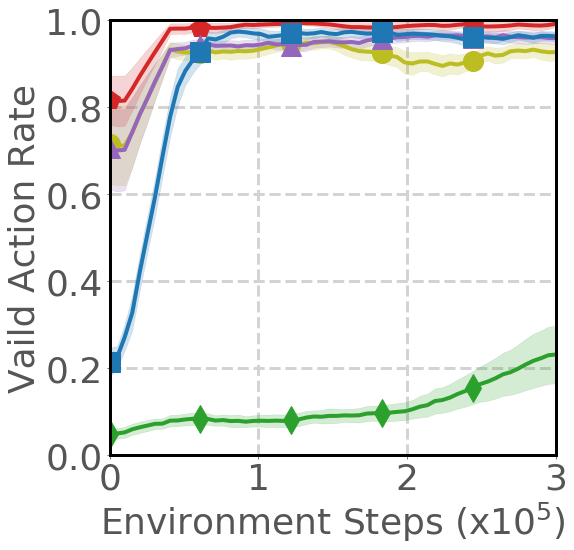}
        \caption{Reacher}
    \end{subfigure}
    \hfill  
        \begin{subfigure}{0.24\textwidth}
        \includegraphics[width=\linewidth]{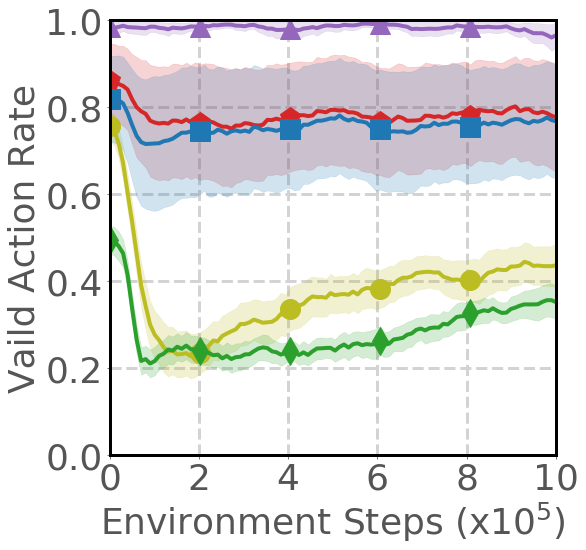}
        \caption{HalfCheetah}
    \end{subfigure}
   \medskip
    \begin{subfigure}{0.24\textwidth}
        \includegraphics[width=\linewidth]{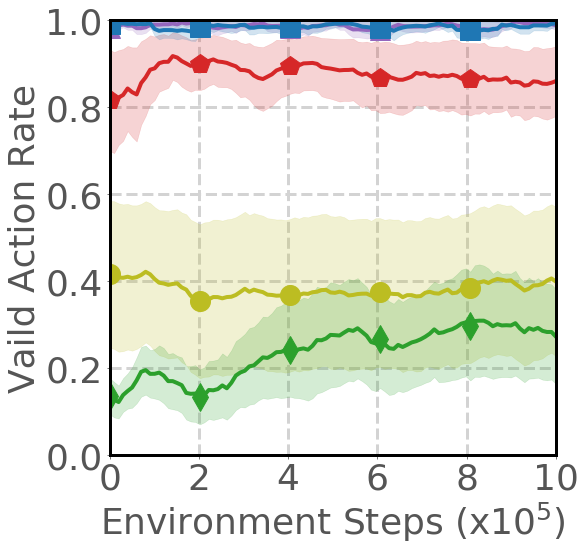}
        \caption{Ant}
    \end{subfigure}
    \hfill   
    \begin{subfigure}{0.24\textwidth}
        \includegraphics[width=\linewidth]{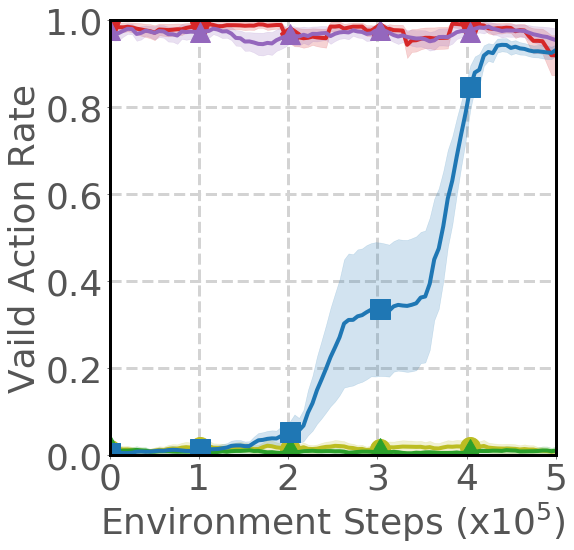}
        \caption{NSFnet}
    \end{subfigure}
    \hfill
    \begin{subfigure}{0.24\textwidth}
        \includegraphics[width=\linewidth]{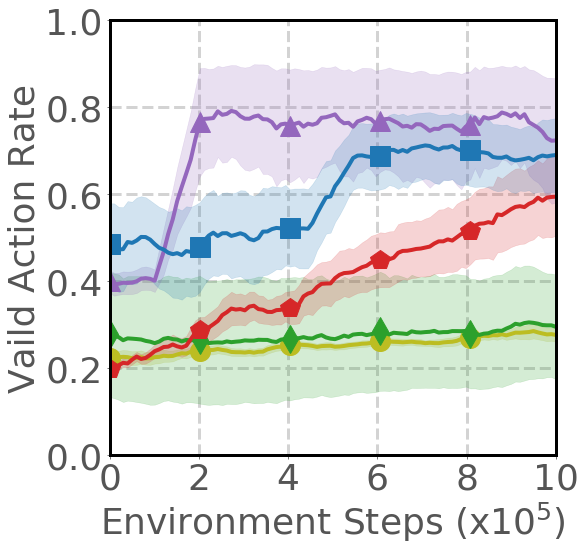}
        \caption{BSS3z}
    \end{subfigure}
    \hfill
        \begin{subfigure}{0.24\textwidth}
        \includegraphics[width=\linewidth]{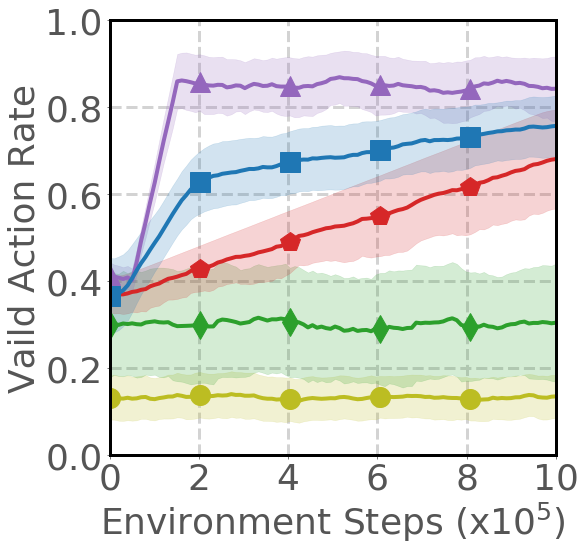}
        \caption{BSS5z}
    \end{subfigure}
    \hfill
    \caption{Learning curves of ARAM and the benchmark algorithms across different environments in terms of the valid action rate.}
\label{fg:sample_accept_rate}
\end{figure}

\newpage

\subsection{Comparison: \proposedalgorithm versus RL for constrained MDPs with projection.}
\textbf{Does \proposedalgorithm more effectively reduce the number of action violations while achieving better performance?}
\cref{fg:sample_cmdp_average_return} and \cref{fg:sample_cmdp_accept_rate} present the training curves of total return and valid action rate, respectively. From these curves, we observe that \proposedalgorithm consistently outperforms FOCOPS in both the evaluation return and the valid action rate, indicating its superior ability to handle action constraints while optimizing performance.

\begin{figure}
    \begin{minipage}{\textwidth}
        \centering
        \includegraphics[width=0.48\textwidth]{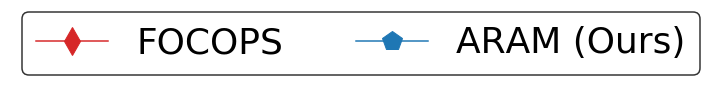}
    \end{minipage}
    \centering  
        \begin{subfigure}{0.24\textwidth}
        \includegraphics[width=\linewidth]{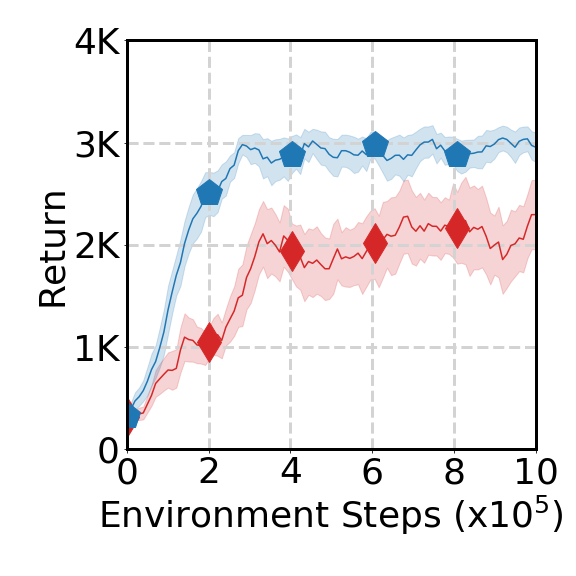}
        \caption{Hopper}
    \end{subfigure}
    \hfill
        \begin{subfigure}{0.24\textwidth}
        \includegraphics[width=\linewidth]{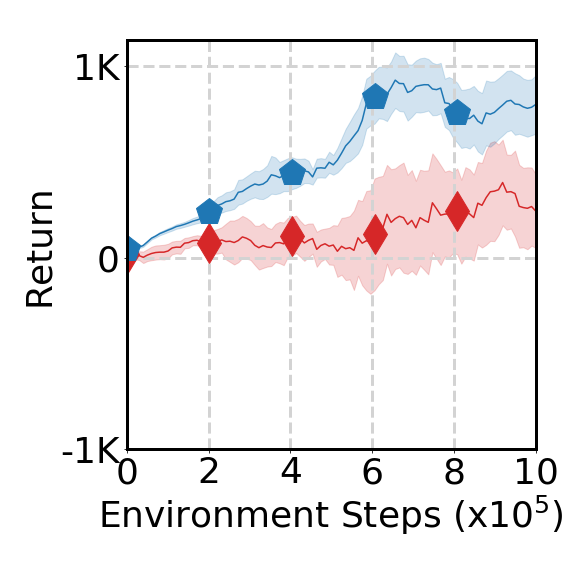}
        \caption{HopperVel}
    \end{subfigure}
    \hfill
    \begin{subfigure}{0.24\textwidth}
        \includegraphics[width=\linewidth]{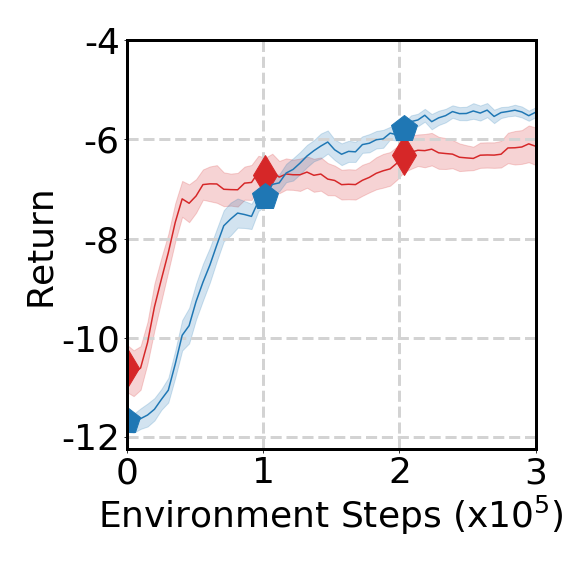}
        \caption{Reacher}
    \end{subfigure}
    \hfill  
        \begin{subfigure}{0.24\textwidth}
        \includegraphics[width=\linewidth]{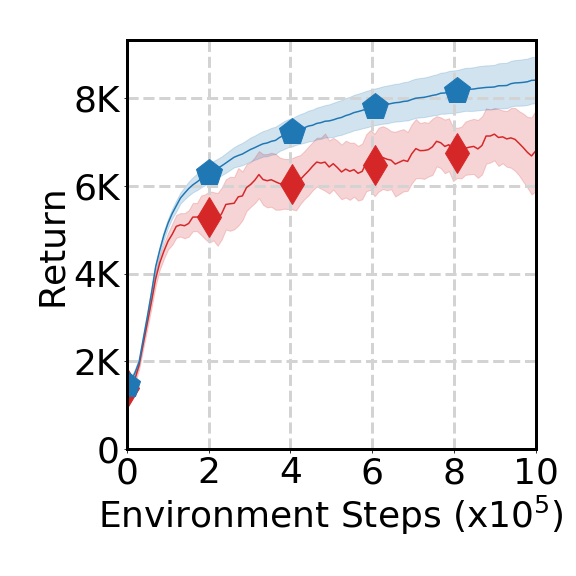}
        \caption{HalfCheetah}
    \end{subfigure}
   \medskip
    \begin{subfigure}{0.24\textwidth}
        \includegraphics[width=\linewidth]{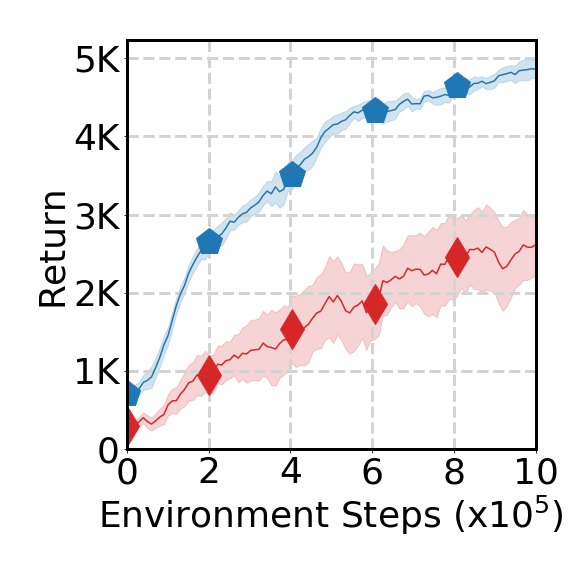}
        \caption{Ant}
    \end{subfigure}
    \hfill   
    \begin{subfigure}{0.24\textwidth}
        \includegraphics[width=\linewidth]{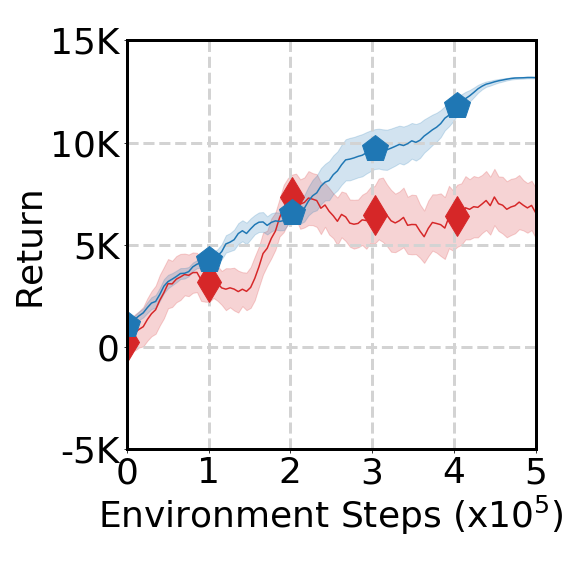}
        \caption{NSFnet}
    \end{subfigure}
    \hfill
    \begin{subfigure}{0.24\textwidth}
        \includegraphics[width=\linewidth]{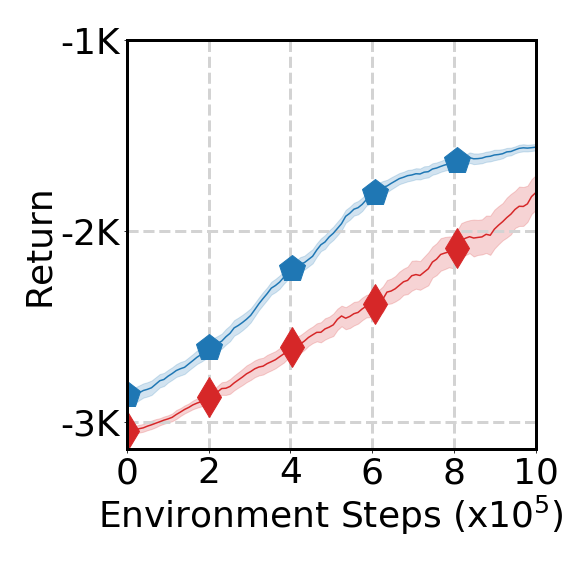}
        \caption{BSS3z}
    \end{subfigure}
    \hfill
        \begin{subfigure}{0.24\textwidth}
        \includegraphics[width=\linewidth]{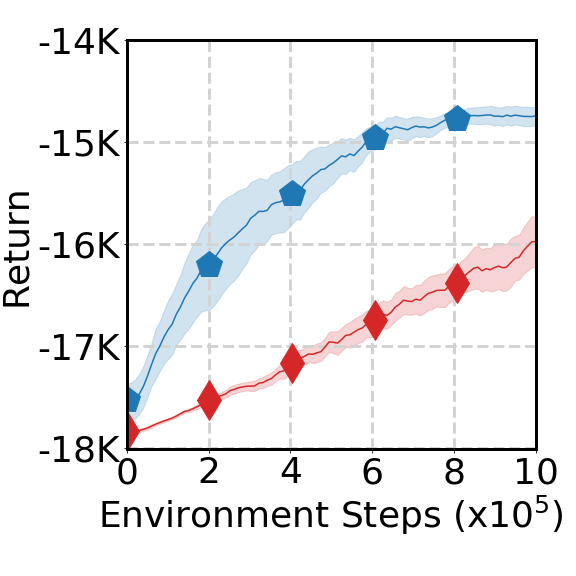}
        \caption{BSS5z}
    \end{subfigure}
    \hfill
    \caption{Comparison of the evaluation return of \proposedalgorithm and FOCOPS across various environments.}
\label{fg:sample_cmdp_average_return}
\end{figure}

\begin{figure}
    \begin{minipage}{\textwidth}
        \centering
        \includegraphics[width=0.48\textwidth]{Figure_exp_2/exp_ab_cmdp_col1.png}
    \end{minipage}
        \begin{subfigure}{0.24\textwidth}
        \includegraphics[width=\linewidth]{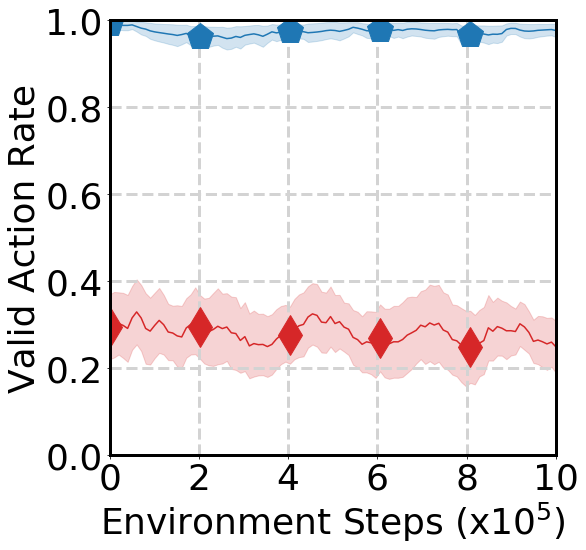}
        \caption{Hopper}
    \end{subfigure}
    \hfill
        \begin{subfigure}{0.24\textwidth}
        \includegraphics[width=\linewidth]{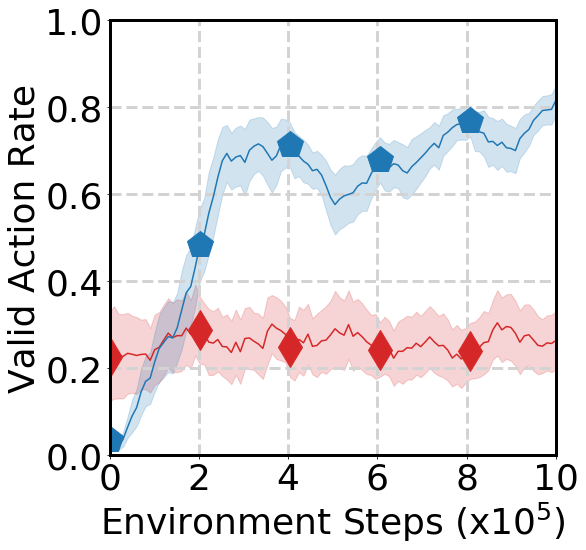}
        \caption{HopperVel}
    \end{subfigure}
    \hfill
    \begin{subfigure}{0.24\textwidth}
        \includegraphics[width=\linewidth]{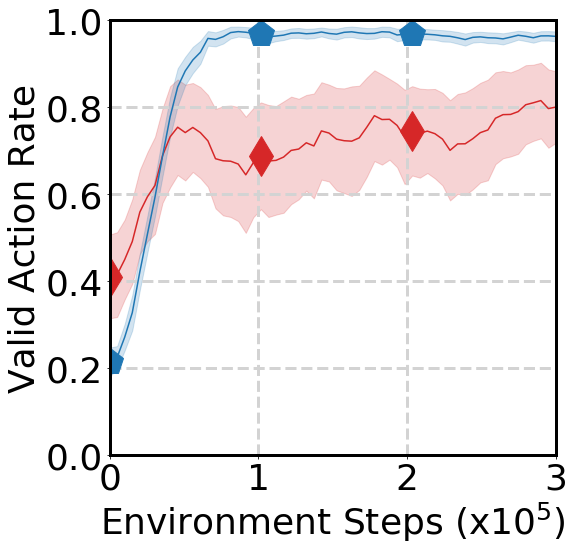}
        \caption{Reacher}
    \end{subfigure}
    \hfill  
        \begin{subfigure}{0.24\textwidth}
        \includegraphics[width=\linewidth]{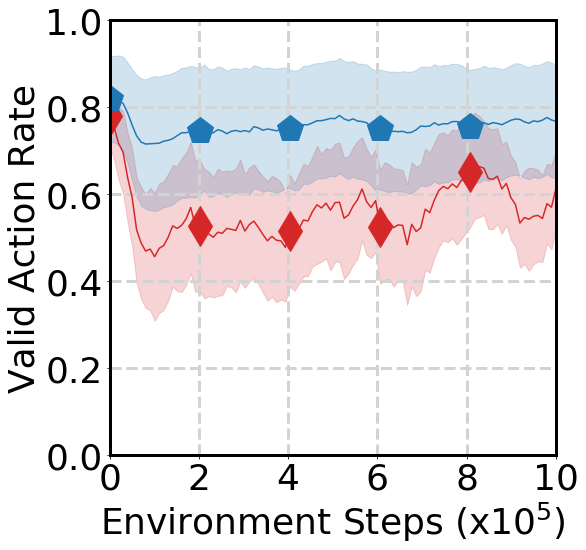}
        \caption{HalfCheetah}
    \end{subfigure}
   \medskip
    \begin{subfigure}{0.24\textwidth}
        \includegraphics[width=\linewidth]{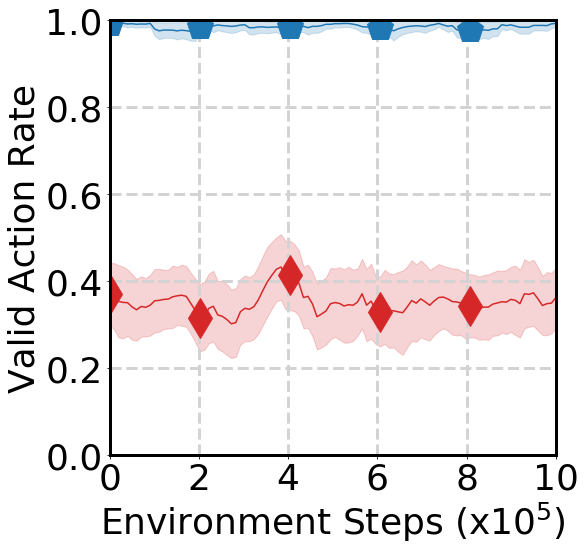}
        \caption{Ant}
    \end{subfigure}
    \hfill   
    \begin{subfigure}{0.24\textwidth}
        \includegraphics[width=\linewidth]{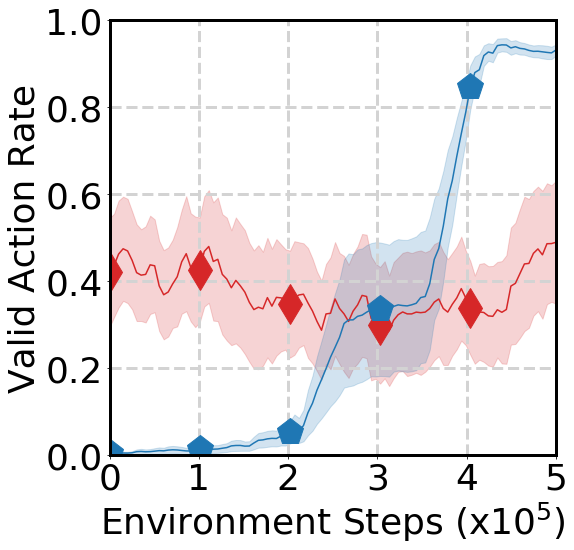}
        \caption{NSFnet}
    \end{subfigure}
    \hfill
    \begin{subfigure}{0.24\textwidth}
        \includegraphics[width=\linewidth]{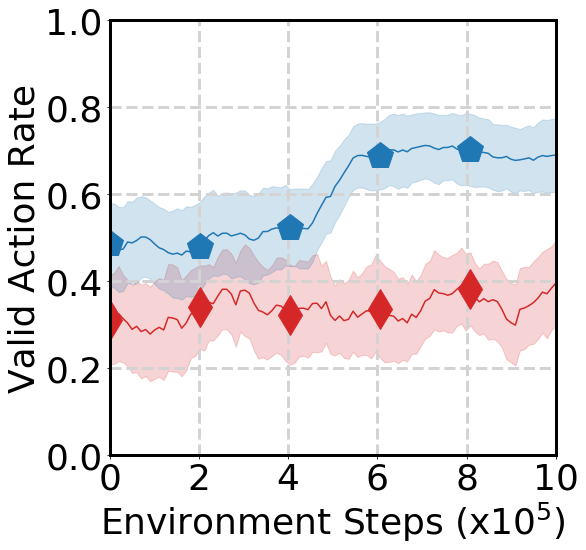}
        \caption{BSS3z}
    \end{subfigure}
    \hfill
        \begin{subfigure}{0.24\textwidth}
        \includegraphics[width=\linewidth]{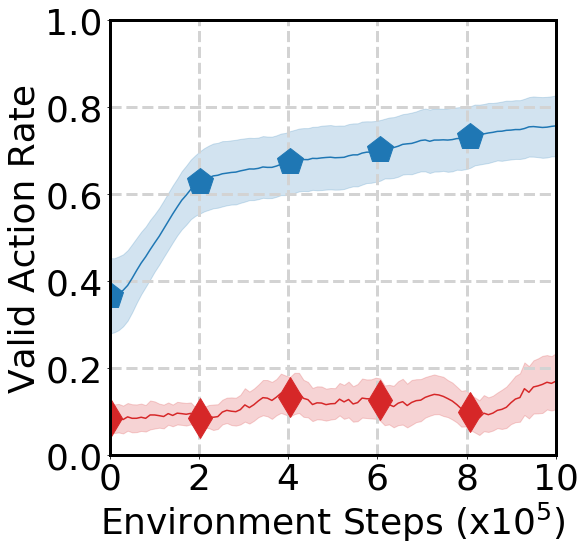}
        \caption{BSS5z}
    \end{subfigure}
    \hfill
    \caption{Comparison of the valid action rate of \proposedalgorithm and FOCOPS across various environments.}
\label{fg:sample_cmdp_accept_rate}
\end{figure}

\subsection{Ablation Study: The learning curves of \proposedalgorithm versus SOSAC}
We present the learning curves of the total return and the valid action rate in \cref{fg:ablation_study_morl_sample}. We can observe that the ARAM consistently achieves higher returns compared to the SOSAC variant with fixed preferences.

\begin{figure}[ht]
    \centering
    
    \begin{subfigure}[t]{\linewidth}
        \centering
        \includegraphics[width=1\linewidth]{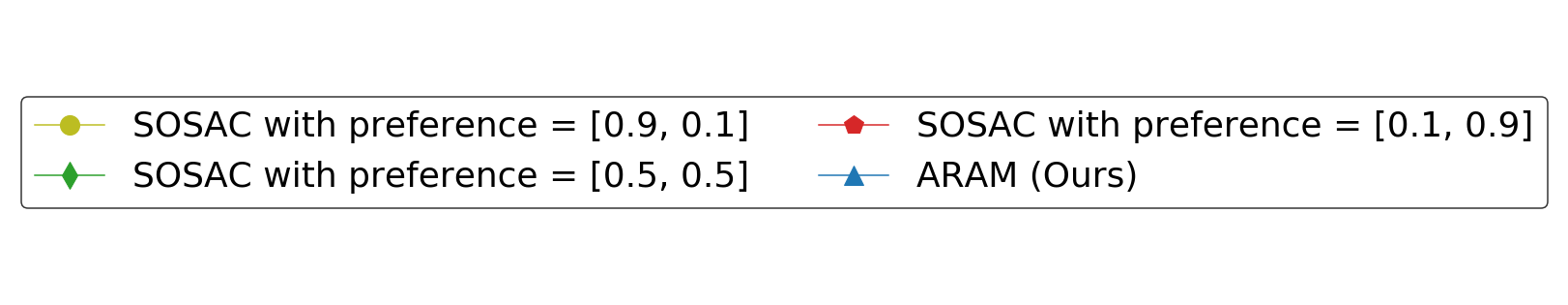}
    \end{subfigure}

        \begin{subfigure}[t]{0.24\linewidth}
        \centering
        \includegraphics[width=\linewidth]{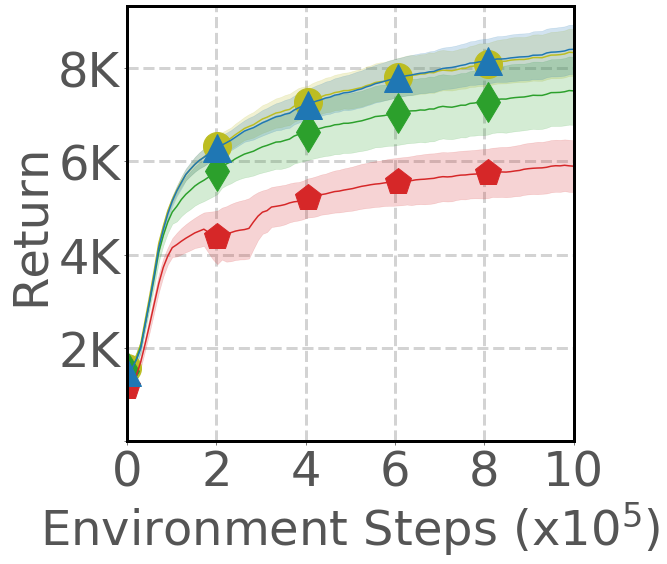}
        \caption{HalfCheetah}
        \label{fg:ablation_study_er_halfcheetah}
    \end{subfigure}
    \begin{subfigure}[t]{0.24\linewidth}
        \centering
        \includegraphics[width=\linewidth]{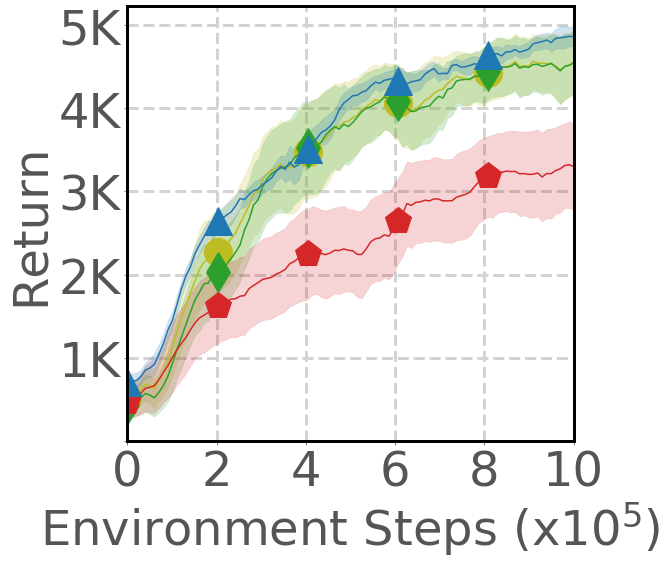}
        \caption{Ant}
        \label{fg:ablation_study_er_ant}
    \end{subfigure}
    \begin{subfigure}[t]{0.24\linewidth}
        \centering
        \includegraphics[width=\linewidth]{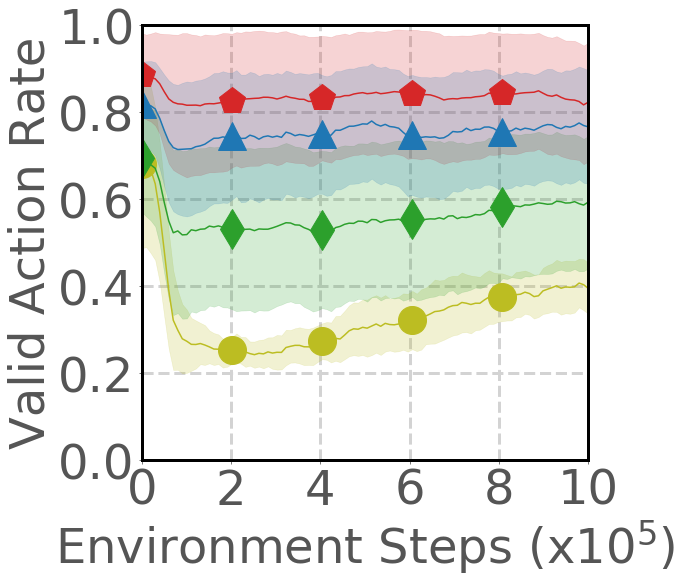}
        \caption{HalfCheetah}
        \label{fg:ablation_study_ac_halfcheetah}
    \end{subfigure}
    \hfill
    \begin{subfigure}[t]{0.24\linewidth}
        \centering
        \includegraphics[width=\linewidth]{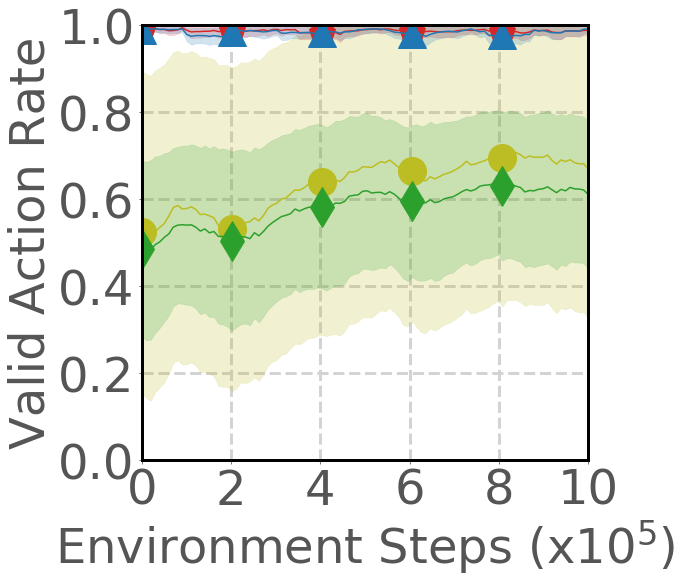}
        \caption{Ant}
        \label{fg:ablation_study_ac_ant}
    \end{subfigure}
    
\caption{\textbf{Ablation Study on MORL.} These figures present the learning curves of \proposedalgorithm and three single-objective SAC variants with different fixed preferences in terms of the evaluation return (\cref{fg:ablation_study_er_halfcheetah,fg:ablation_study_er_ant}) and the valid action rate (\cref{fg:ablation_study_ac_halfcheetah,fg:ablation_study_ac_ant}) in the HopperVel and Ant environments.}
\label{fg:ablation_study_morl_sample}
\end{figure}
\CAM{\subsection{Sensitivity Analysis: Effect of Violation Penalty in \proposedalgorithm}
We present the results of a hyperparameter sensitivity study on the violation penalty $\mathcal{K}$ in the HopperVel and Ant environments, as shown in~\Cref{fg:ablation_study_penalty_weight}. This study examines how different fixed values of $\mathcal{K}$ impact the return and valid action rate during training. The findings demonstrate that ARAM maintains stable performance across varying $\mathcal{K}$ values, indicating that the model is largely insensitive to this hyperparameter. This supports our original experimental setup, where a single $\mathcal{K}$ value was used consistently across all tasks and environments without requiring task-specific tuning. 
\begin{figure}[ht]
    \centering
    \begin{subfigure}[t]{\linewidth}
        \centering
        \includegraphics[width=1\linewidth]{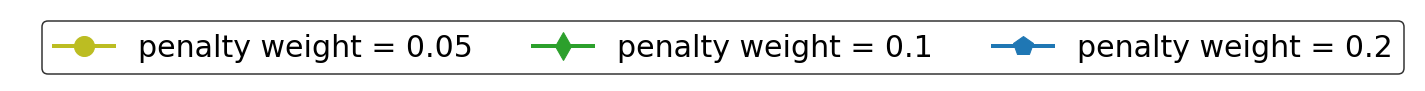}
    \end{subfigure}

        \begin{subfigure}[t]{0.24\linewidth}
        \centering
        \includegraphics[width=\linewidth]{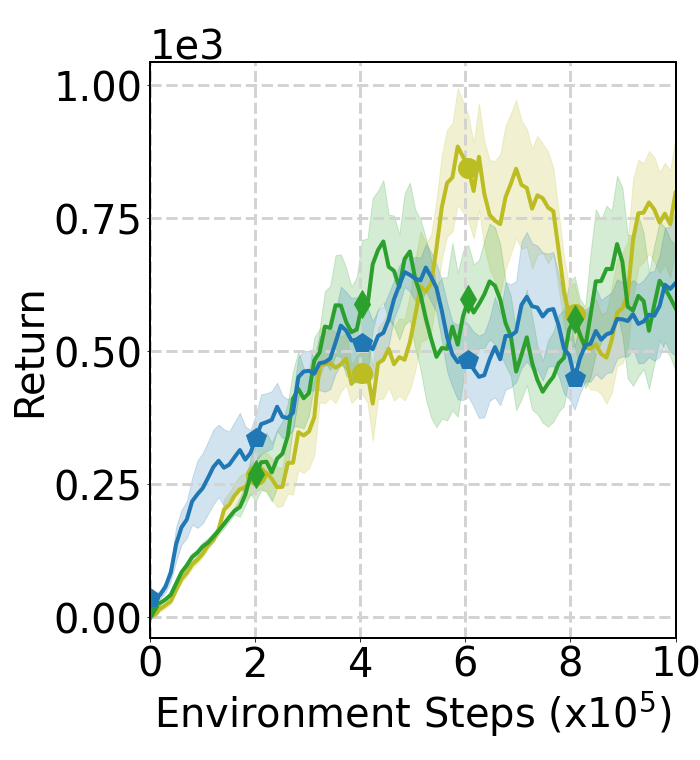}
        \caption{HopperVel}
        \label{fg:exp_penalty_weight_hoppervel_er}
    \end{subfigure}
    \begin{subfigure}[t]{0.24\linewidth}
        \centering
        \includegraphics[width=\linewidth]{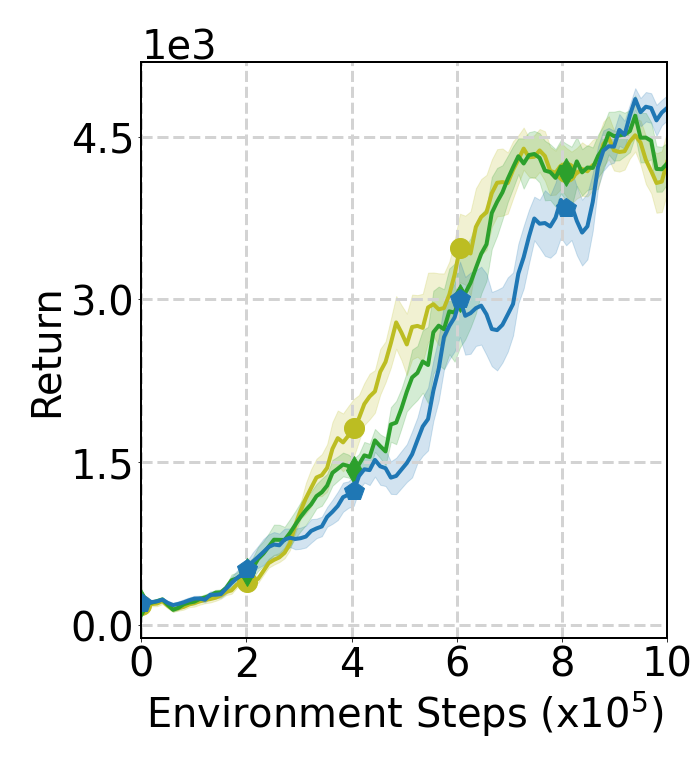}
        \caption{Ant}
        \label{fg:exp_penalty_weight_ant_er}
    \end{subfigure}
    \begin{subfigure}[t]{0.24\linewidth}
        \centering
        \includegraphics[width=\linewidth]{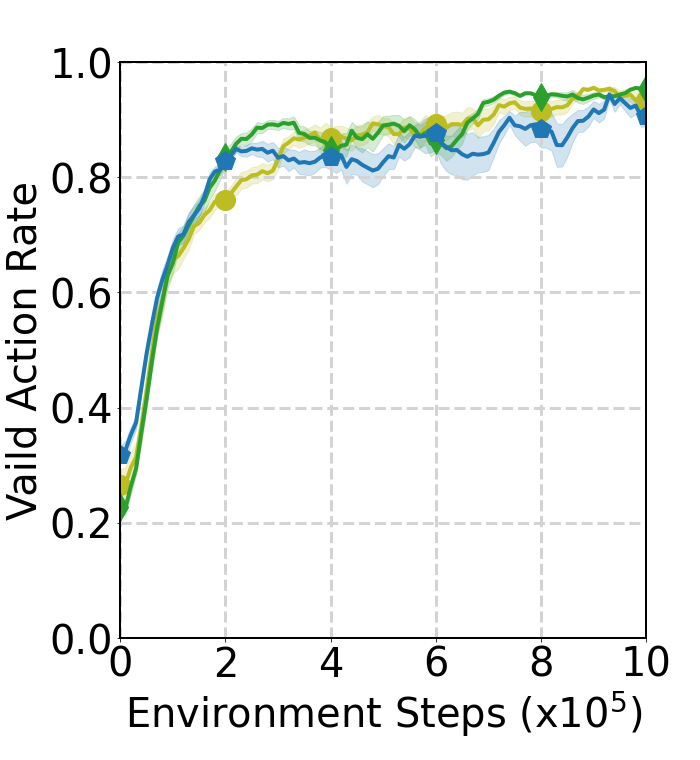}
        \caption{HopperVel}
        \label{fg:exp_penalty_weight_hoppervel_ac}
    \end{subfigure}
    \hfill
    \begin{subfigure}[t]{0.24\linewidth}
        \centering
        \includegraphics[width=\linewidth]{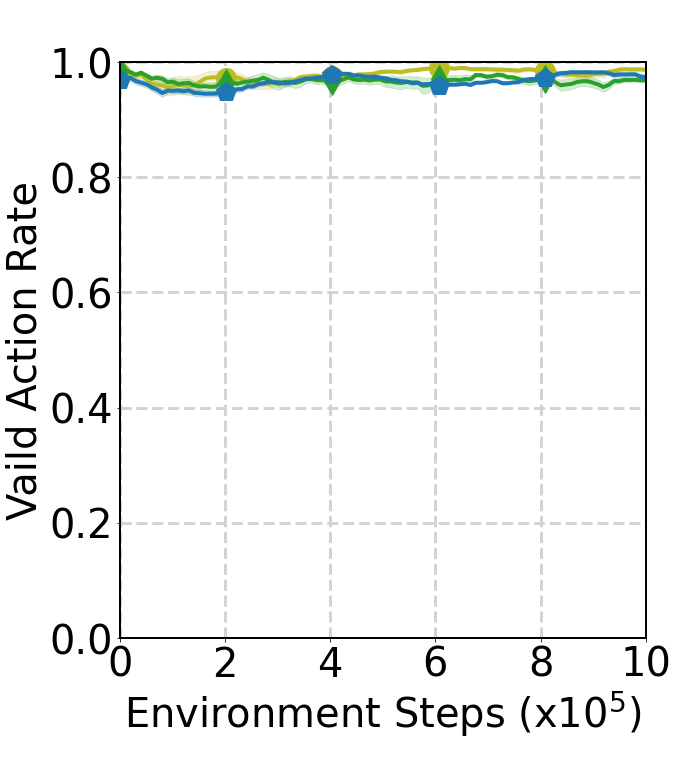}
        \caption{Ant}
        \label{fg:exp_penalty_weight_ant_ac}
    \end{subfigure}

\caption{\textbf{Sensitivity Analysis of the Violation Penalty $\mathcal{K}$.} These figures provide a comparison of the return and the valid action rate across training steps for different values of $\mathcal{K}$ ($\mathcal{K} = 0.05, 0.1, 0.2$) in the HopperVel and Ant environments. These figures show that \proposedalgorithm is not highly sensitive to the penalty weight.}
    \label{fg:ablation_study_penalty_weight}
\end{figure}}
\CAM{\subsection{Sensitivity Analysis: Effect of Target Distribution in \proposedalgorithm}
We analyze the impact of different target distributions in \proposedalgorithm, specifically examining whether this choice significantly affects the final performance.
Since ARAM is implemented based on the MOSAC framework, the policy network outputs the mean and standard deviation of a Gaussian distribution, denoted as:
$\pi_{\phi}(a|s)\sim \mathcal{N}(\mu,\sigma^2)$
Following the notation in the paper, we configure an alternative (constrained) target distribution, $\pi^{\dagger}(a|s)$, by leveraging the (unconstrained) Student-t distribution:
$\tilde{\pi}_{\phi}(a|s) \sim T_{\nu} (\mu,\sigma^2)$
where $T_{\nu}$ denotes a Student-t distribution with degrees of freedom $\nu$, mean $\mu$, and variance $\sigma^2$. The constrained target distribution is then defined as:
\begin{itemize}
\item $\pi_{\phi}^{\dagger}(a|s)\propto \tilde{\pi}_{\phi}(a|s)$ for $a\in \mathcal{C}(s)$.
\item $\pi_{\phi}^{\dagger}(a|s)=0$ for $a\notin \mathcal{C}(s)$.
\end{itemize}
Notably, the Gaussian distribution is a special case of the Student-t distribution with $\nu=\infty$. The Student-t distribution has heavier tails compared to the Gaussian, potentially leading to a more exploratory action distribution. For ARM, the scaling factor $M$ can be set as:
$M=M_0/\left(\int_{a\in \mathcal{C}(s)} \tilde{\pi}_{\phi}(a|s) da\right),$
where $M_0$ is a constant dependent on $\nu$. The return and valid action rate is shown on~\Cref{fg:exp_ab_target_distribution}. 
We observe that \proposedalgorithm can have a slight performance improvement under this new target distribution (especially at the early training stage) and is not very sensitive to this choice. This also corroborates the nice flexibility in the choice of target distribution.
\begin{figure}[!h]
\centering
\begin{subfigure}[t]{\linewidth}
    \centering
    \includegraphics[width=1\linewidth]{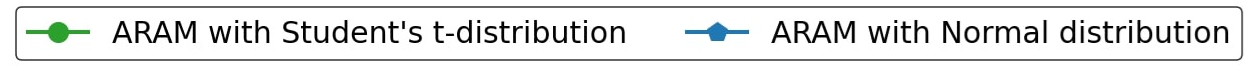}
\end{subfigure}
    \begin{subfigure}[t]{0.45\linewidth}
    \centering
    \includegraphics[width=\linewidth]{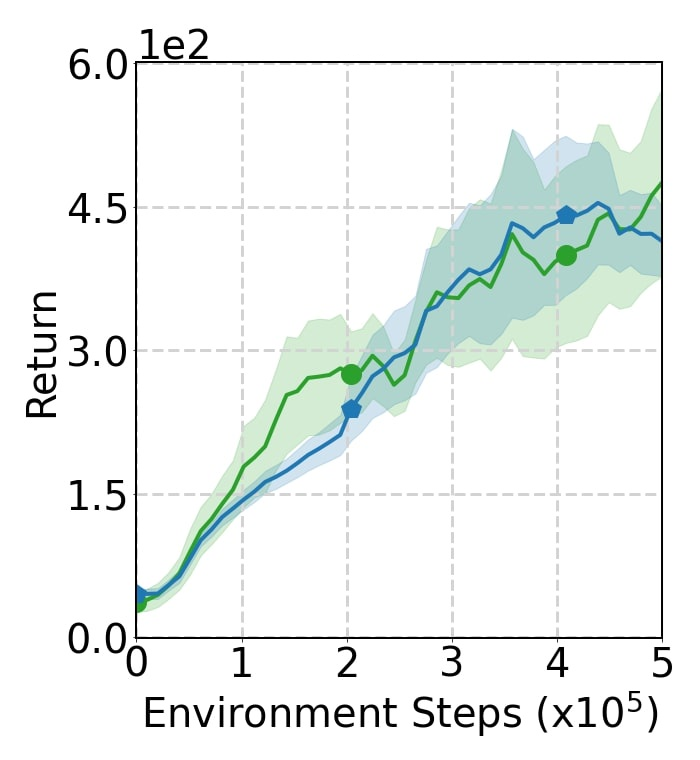}
    \label{fg:exp_target_distribution_er}
\end{subfigure}
\begin{subfigure}[t]{0.45\linewidth}
    \centering
    \includegraphics[width=\linewidth]{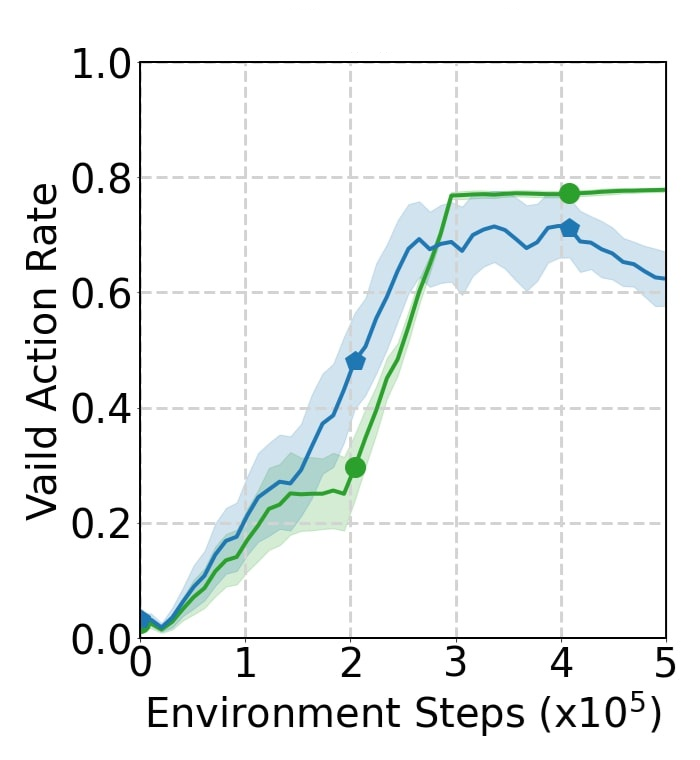}
    \label{fg:exp_target_distribution_ac}
\end{subfigure}
\caption{\textbf{Sensitivity Analysis of the Target Distribution in \proposedalgorithm.} These figures provide a comparison of the return and the valid action rate across training steps for different target distributions: Student’s t-distribution and the Normal distribution. These figures show that \proposedalgorithm is not highly sensitive to the choice of target distribution.}
\label{fg:exp_ab_target_distribution}
\end{figure}

\subsection{Additional Experiment: Scalability of \proposedalgorithm}
To demonstrate the scalability of \proposedalgorithm to higher-dimensional environments, we extend the NSFnet environment to a 20-dimensional action space called NSFnet20d. Specifically, the agent manages packet delivery across a classic T3 NSFNET network backbone by simultaneously handling 20 packet flows, each with distinct routing paths. The action is defined as the rate allocation of each flow along each candidate path. In total, there are 30 communication links shared by these 20 different flows. The action constraints ensure that the distribution of packets on these shared links stays within the bandwidth capacity, defined as 50 units for each link.
\Cref{ab:exp_ab_high_dim} presents the results, illustrating ARAM's superior performance in the evaluation return, the valid action rate, and the QP efficiency compared to ACRL benchmark methods. ARAM consistently achieves a higher action validity rate while maintaining robust learning under high-dimensional action spaces and complex constraints, demonstrating its adaptability and efficiency.
\begin{figure}[!h]
\centering
\includegraphics[width=1\linewidth]{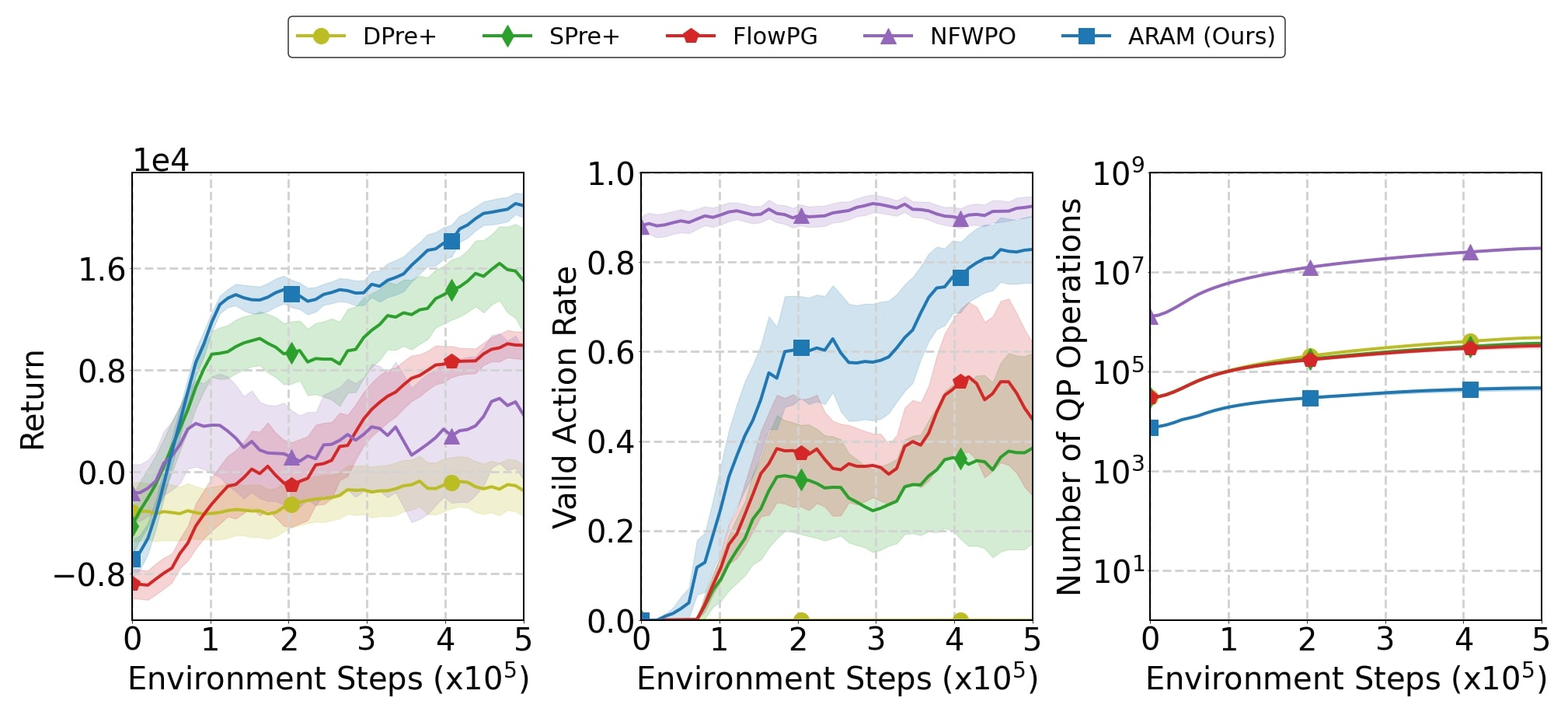}
\caption{Comparison of the evaluation return, the valid action rate, and the number of QP operations in the NSFnet20d environment.}
\label{ab:exp_ab_high_dim}
\end{figure}
}

\newpage

\section{Detailed Experimental Configurations}

\subsection{Environments and Action Constraints}
\label{app:exp_env_setting}
In this section, we provide the details about the experimental setups and the action constraints considered in different environments. 
\begin{itemize}[leftmargin=*]
    \item {MuJoCo}: The action is defined as a vector $(a_1, a_2, \cdots, a_k)$, with each element representing the torque applied to a specific joint, and $(w_1, w_2, \cdots, w_k)$ denotes the angular velocities of the joints. The tasks considered in our experiments include:
        \begin{itemize}
           \item \textbf{Reacher, HalfCheetah, Hopper, and Ant}: We use the experimental settings and objectives provided by OpenAI Gym V3 to control the agents in these environments. Each environment presents unique challenges and objectives, such as reaching a target location, maintaining balance, or achieving high-speed movement.
           \item \textbf{HopperVel}: This task is specifically designed to control the robot to maintain a target horizontal velocity of 3 m/s, differing from the standard Hopper's goal of forward hopping.
        \end{itemize}
    \item {NSFnet}: Based on the T3 NSFNET Backbone as discussed in \citep{lin2021escaping}, this network consists of 9 different packet flows, each with distinct routing paths. There are 8 communication links shared by different flows. The action is defined as the rate allocation of each flow along each candidate path. The action constraint is to check if the distribution of packets on these shared links stays within the bandwidth limits, defined as 50 units for each link. To define action constraints, we described the eight-tuple $(\text{link}_1, ..., \text{link}_8)$, each containing the total amount of flow that pass through that specific link.
    \item {BSS}: The action is to allocate bikes to stations under random demands.     
    \begin{itemize}
        \item \textbf{BSS3z}: There are 3 stations with a total of 90 bikes ($m = 90$, $n = 3$), and each station has a capacity of 40 bikes.
        \item \textbf{BSS5z}: This system comprises 5 stations with a total of 150 bikes ($m = 150$, $n = 5$), and each station also has a capacity of 40 bikes.
    \end{itemize}
    \end{itemize}
\begin{table}[h]
\centering
\caption{This table provides an overview of the action constraints applied in each experiment environment.}

\label{tab:action_constraint}
\begin{tabular}{l c}
Environment                        & Action Constraint   \\ \hline \hline
HopperVel                        &   $\sum_{i=1}^{3} \max (w_i a_i, 0) \leq 10$  \\
Hopper                        &    $\sum_{i=1}^{3} \max (w_i a_i, 0) \leq 10$  \\
Reacher                        &  $a_1^2 + a_2^2 \leq 0.05$   \\
HalfCheetah                        &  $\sum_{i=1}^{6} |w_i a_i| \leq 20$   \\
Ant                        &  $\sum_{i=1}^{8} a_i^2  \leq 2$   \\
NSFnet                        &    $\sum_{i \in link_j} a_i \leq 50, \quad \forall j \in \{1, 2, \dots, 8\}$ \\
BSS3z                        &  $\left|\sum_{i=1}^{3} a_i - 90\right| \leq 5, a_i <=40$  \\
BSS5z                        &  $\left|\sum_{i=1}^{5} a_i - 150\right| \leq 5, a_i <=40$  \\
\end{tabular}
\end{table}
\subsection{Hyperparameters of \proposedalgorithm}
We conduct all the experiments with the following hyperparameters.
\begin{table}[h]
\centering
\caption{This table provides an overview of the hyperparameters used in \proposedalgorithm.}
\label{tab:algorithm_hyperparameters}
\begin{tabular}{ll}
Parameter                        & \proposedalgorithm   \\ \hline
Optimizer                        & Adam    \\
Learning Rate                    & 0.0003  \\
Discount Factor                     & 0.99    \\
Replay Buffer Size               & 1000000 \\
Number of Hidden Units per Layer & [256, 256]     \\
Number of Samples per Minibatch  & 256     \\
Nonlinearity                     & ReLU    \\
Target Smoothing Coefficient     & 0.005   \\
Target Update Interval           & 1       \\
Gradient Steps                   & 1      \\
Sample Ratio for Augmented Replay Buffer ($\eta$) &	0.2 \\
Decay Interval for $\eta$ &	10,000 \\
Decay Factor for $\eta$  &	0.9
\end{tabular}
\end{table}

\begin{table}[h]
\centering
\caption{This table provides an overview of the hyperparameters used in the baseline methods.}

\label{tab:baselines_hyperparameters}
\begin{tabular}{lllll}
Parameter                        & DPre+ & SPre+  & NFWPO & FlowPG \\ \hline
Learning Rate                    & 0.001  & 0.001   & 0.001   & 0.001   \\
FW Learning Rate                    & -  & -   & 0.01   & -   \\
Discount Factor                  & 0.99    & 0.99   & 0.99   & 0.99   \\
Replay Buffer Size               & 1000000 & 1000000   & 1000000   & 1000000   \\
Number of Hidden Units   & [256, 256]     & [256, 256]   & [256, 256]   & [400, 300]   \\
Number of Samples per Minibatch   & 256     & 256   & 256   & 100   \\
Target Smoothing Coefficient      & 0.005   & 0.005   & 0.005   & 0.005   \\
Target Update Interval            & 1       & 1   & 1   & 1   \\
Gradient Steps                    & 1       & 1   & 1   & 1   \\ 
\end{tabular}
\end{table}
\section{Detailed introduction to the existing ACRL methods}

\subsection{Action Projection}
An action produced by the neural network is not guaranteed to remain within the feasible action set. To address this, there are various frameworks to map the output action to one that satisfies the constraints. An intuitive approach is through a QP operation to find the closest feasible action to the original one within the acceptable action space. We refer to this as the QP-Solver: 
\begin{equation}
    \text{QP-Solver}(s, a, \mathcal{C}(s))={\arg\min}_{a'\in \mathcal{C}(s)}||a'-a||_2.
\end{equation}
    \citet{kasaura2023benchmarking} describe a family of algorithms based on the QP-Solver. However, as the complexity of the action space increases, the computation time for this type of approach can become lengthy. To ensure the feasibility of policy learning within a reasonable timeframe, it is essential to minimize reliance on the QP-Solver.

\subsection{Frank-Wolfe Policy Optimization (FWPO)}
Aside from the QP-Solver, \citet{lin2021escaping} employ the Frank-Wolfe (FW) method to update the policy and propose Frank-Wolfe Policy Optimization (FWPO). To describe the different policy parameters, FWPO uses $\phi_k$ to denote the policy parameters in the $k$-th iteration. To update the policy with the action constraints, FWPO adopts a generalized policy iteration framework \citep{sutton2018reinforcement}, which consists of two subroutines:

(i) Policy update via state-wise FW: For each state $s$ and the corresponding learning rate $\alpha_k(s)$, search for feasible action, then guide the current policy parameters update.
\begin{equation}
    c_{k}(s)=\arg\max_{c \in \mathcal{C}(s)}~\langle c, \nabla_{a} Q_\theta(s,a;\pi_{\phi_k})\rvert_{a=\pi_{\phi_k}(s)} \rangle,
    \label{eq:FW update1}
\end{equation}
\begin{equation}
    \pi_{\phi_{k+1}}(s)\leftarrow \pi_{\phi_k}(s)+\alpha_k(s) (c_k(s)-\pi_{\phi_k}(s)),
    \label{eq:FW update2}
\end{equation}
(ii) Evaluation of the current policy: Use any policy evaluation algorithm to obtain $Q_\theta(s, a; \pi_{\phi_{k+1}})$. Where $c_k(s)-\pi_{\phi_k}(s)$ is the update direction.

%

\subsection{FlowPG}
\citet{brahmanage2023flowpg} adopt normalizing to create an invertible mapping between the support of a simple distribution and the space of valid actions. Specifically, the focus is on the conditional RealNVP model, which is well-suited for the general ACRL setting where the set of valid actions depends on the state variable. The conditional RealNVP extends the original RealNVP by incorporating a conditioning variable in both the prior distribution and the transformation functions. These transformation functions, implemented as affine coupling layers, enable efficient forward and backward propagation during model learning and sample generation.

\begin{algorithm}[H]
\caption{FlowPG Algorithm}
\label{alg:FlowPG}
\SetKwInOut{Input}{Input}
\SetKwInOut{Output}{Output}

\Input{Initial parameter vectors $\theta, \phi, \psi$}
Initialize replay buffer $\mathcal{B}$\;
\For{episode = 1, ..., M}{
    Initialize the random noise generator $\mathcal{N}$ for action exploration\;
    \For{$t = 1, ..., T$}{
        Select action $\tilde{a}_t = \pi_{\phi}(s_t) + \mathcal{N}_t$ based on current policy and exploration noise\;
        Apply flow and get the environment action $a_t = f_\psi(\tilde{a}_t, s_t)$\;
        \If{$a_t$ is invalid}{
            $a_t \leftarrow \text{QP-Solver}(s_t, a_t, \mathcal{C}(s_t))$\;
        }
        Update actor policy using the sampled policy gradient:
        \[
        \nabla_\phi J(\pi_{\phi}) = \nabla_a Q_\theta(s_t, a; \pi) \nabla_{\tilde{a}} f_\psi(\tilde{a}, s_t) \nabla_\theta \pi_{\phi}(s_t) \big|_{\tilde{a} = \pi_{\phi}(s_t), a = f_\psi(\tilde{a}, s_t)}
        \]
    }
}
\end{algorithm}

\textbf{Learning the RL-Model.} Integrating DDPG with normalizing flows involves using the learned mapping directly in the original policy network, which improves training speed and stability. The architecture of the policy network consists of the original DDPG policy network and the learned mapping function $f_\psi$. The objective is to learn a deterministic policy $f_\psi(\pi_\phi(s), s)$ that gives the action $a$ given a state $s$, maximizing $J(\pi_\phi)$:

\begin{equation}
\max_{\phi} J(\pi_\phi) = \mathbb{E}_{s \sim \mathcal{B}}[Q_\theta(s, f_\psi(\pi_\phi(s), s); \pi_\phi)]
\end{equation}

where $\mathcal{B}$ is the replay buffer and $\phi$ represents the Q-function parameters, treated as constants during policy update. The policy update involves gradient ascent with respect to the policy network parameters $\phi$:

\begin{equation}
\nabla_\phi J(\pi_\phi) = \mathbb{E}_{s \sim \mathcal{B}}[\nabla_a Q_\theta(s, a; \pi_\phi) \nabla_{\tilde{a}} f_\psi(\tilde{a}, s) \nabla_\phi \pi_\phi(s)\big|_{\tilde{a} = \pi_\phi(s), a = f_\psi(\tilde{a}, s)}]
\end{equation}

The critic update follows the same rule as in DDPG, with actions stored in the replay buffer being either the flow model output or projected actions. The proposed method can be extended to other RL algorithms like SAC or PPO, as normalizing flows enable the computation of log probabilities of actions required during training. The pseudo code of FlowPG is provided in \cref{alg:FlowPG}.

\end{document}